\def\isarxiv{1} 
\definecolor{cvprblue}{rgb}{0.21,0.49,0.74}
\theoremstyle{plain}
\newtheorem{theorem}{Theorem}[section]
\newtheorem{lemma}[theorem]{Lemma}
\newtheorem{definition}[theorem]{Definition}
\newtheorem{corollary}[theorem]{Corollary}
\newtheorem{fact}[theorem]{Fact}
\newtheorem{remark}[theorem]{Remark}
\newcommand{\R}{\mathbb{R}}
\newcommand{\F}{\mathbb{F}}
\newcommand{\AC}{\mathsf{AC}}
\newcommand{\NC}{\mathsf{NC}}
\newcommand{\TC}{\mathsf{TC}}
\newcommand{\rope}{\mathsf{RoPE}}
\newcommand{\dlogtime}{\mathsf{DLOGTIME}}
\DeclareMathOperator*{\E}{{\mathbb{E}}}
\DeclareMathOperator*{\var}{\mathrm{Var}}
\DeclareMathOperator{\poly}{poly}
\DeclareMathOperator{\round}{round}
\DeclareMathOperator{\ds}{/\!\!/}
\DeclareMathOperator{\diag}{diag}
\newcommand*{\RN}[1]{\expandafter\@slowromancap\romannumeral #1@}
\begin{document}

\ifdefined\isarxiv

\date{}

\title{Circuit Complexity Bounds for RoPE-based Transformer Architecture}
\author{
Bo Chen\thanks{\texttt{ bc7b@mtmail.mtsu.edu}. Middle Tennessee State University.}
\and
Xiaoyu Li\thanks{\texttt{
xli216@stevens.edu}. Stevens Institute of Technology.}
\and
Yingyu Liang\thanks{\texttt{
yingyul@hku.hk}. The University of Hong Kong. \texttt{
yliang@cs.wisc.edu}. University of Wisconsin-Madison.} 
\and 
Jiangxuan Long\thanks{\texttt{ lungchianghsuan@gmail.com}. South China University of Technology.}
\and
Zhenmei Shi\thanks{\texttt{
zhmeishi@cs.wisc.edu}. University of Wisconsin-Madison.}
\and 
Zhao Song\thanks{\texttt{ magic.linuxkde@gmail.com}. The Simons Institute for the Theory of Computing at the University of California, Berkeley.}
}

\else

\title{Circuit Complexity Bounds for RoPE-based Transformer Architecture}
\author{author1\\
Institution1\\
Institution1 address\\
{\tt\small firstauthor@i1.org}
\and
author2\\
Institution2\\
First line of institution2 address\\
{\tt\small secondauthor@i2.org}
\and
author3\\
Institution2\\
First line of institution2 address\\
{\tt\small secondauthor@i2.org}
\and
author4\\
Institution3\\
First line of institution3 address\\
{\tt\small secondauthor@i3.org}
}

\fi

\ifdefined\isarxiv
\begin{titlepage}
  \maketitle
  \begin{abstract}
Characterizing the express power of the Transformer architecture is critical to understanding its capacity limits and scaling law. Recent works provide the circuit complexity bounds to Transformer-like architecture. On the other hand, Rotary Position Embedding ($\mathsf{RoPE}$) has emerged as a crucial technique in modern large language models, offering superior performance in capturing positional information compared to traditional position embeddings, which shows great potential in application prospects, particularly for the long context scenario. Empirical evidence also suggests that $\mathsf{RoPE}$-based Transformer architectures demonstrate greater generalization capabilities compared to conventional Transformer models. In this work, we establish a circuit complexity bound for Transformers with $\mathsf{RoPE}$ attention. Our key contribution is that we show that unless $\mathsf{TC}^0 = \mathsf{NC}^1$, a $\mathsf{RoPE}$-based Transformer with $\mathrm{poly}(n)$-precision, $O(1)$ layers, hidden dimension $d \leq O(n)$ cannot solve the Arithmetic formula evaluation problem or the Boolean formula value problem. This result significantly demonstrates the fundamental limitation of the expressivity of the $\mathsf{RoPE}$-based Transformer architecture, although it achieves giant empirical success. Our theoretical result not only establishes the complexity bound but also may instruct further work on the $\mathsf{RoPE}$-based Transformer.

  \end{abstract}
  \thispagestyle{empty}
\end{titlepage}

{\hypersetup{linkcolor=black}
\tableofcontents
}
\newpage

\else
\maketitle
\begin{abstract}

\end{abstract}

\fi

\section{Introduction}
Recently, Large Language Models (LLMs), such as GPT-4~\cite{aaa+23}, Claude~\cite{a24}, Llama~\cite{m24}, and more recently, OpenAI's o1~\cite{o24}, have exhibited remarkable potential to revolutionize numerous facets of daily life, including conversational AI~\cite{lct+24}, AI agents~\cite{xcg+23, cyl+24}, search capabilities~\cite{o24}, and AI assistants~\cite{khc+24, fjl+24}, among others. One of the most significant emergent capabilities of LLMs is their proficiency in handling long-context information, which is essential for effectively processing complex documents such as academic papers, official reports, and legal texts. LLMs also have demonstrated exceptional capabilities in tackling long-context tasks, such as zero-shot summarization~\cite{cam24, zjv+24} and sustaining coherent, extended conversations~\cite{xgw+22, mlt+24}. The o1 model from OpenAI~\cite{o24} represents a major advancement in this field. By leveraging Chain-of-Thought (CoT) reasoning~\cite{wws+22, kgr+22} and incorporating Retrieval Augmented Generation (RAG)~\cite{lpp+20, gxg+23}, it showcases a level of expertise comparable to PhD-level problem solving, with both techniques heavily relying on extensive contextual understanding. 

Large Language Models (LLMs) are primarily built upon the Transformer architecture~\cite{vsp+17}, which uses the self-attention mechanism as its core component. Given this foundational structure, an important question arises: what computational primitives can the components of the Transformer implement, and what problems can the entire system solve collectively? These questions are crucial for interpreting Transformers in a principled manner, understanding the potential limitations of their reasoning capabilities, and fostering trust in deployed Transformer-based systems.

To address the aforementioned questions and to investigate the expressiveness of transformers, prior research has made significant strides. Recent studies, such as \cite{ms23}, have established two key results concerning both non-uniform and $\mathsf{L}$-uniform settings: first, any depth-$ d $ transformer with $ c \log n $-precision can be simulated by a threshold circuit family with constant depth; second, such a transformer can also be simulated by a $\mathsf{L}$-uniform threshold circuit family of constant depth. Further advancing these findings, \cite{ms23_neurips} demonstrate that $\dlogtime$-uniform $\TC^0$ circuits are capable of simulating softmax-attention transformers. Building on this foundation, \cite{chi24} refine these results by increasing the accuracy of approximation. They enhance the precision for softmax-attention transformers from $ O(\log n) $ to $ O(\poly(n)) $, confirming that these transformers fall within the $\dlogtime$-uniform $\TC^0$ class. Additionally, they show that a softmax-attention transformer with an absolute error bound of $ 2^{-O(\poly(n))} $ is also contained within $\dlogtime$-uniform $\TC^0$.

On the other hand, first introduced by \cite{rope24}, Rotary Position Embedding ($\rope$) enhances Transformers by encoding both absolute and relative positional information through a rotation matrix, enabling greater sequence length flexibility, improved attention mechanism efficiency, and better performance on long-text tasks, exemplified by $\rope$-based language models that can summarize an entire book in a single pass. Due to these advantageous properties, $\rope$ has been widely adopted in numerous empirical studies~\cite{cnd+23,bbe+23,bsa+23}. However, despite its considerable success, the underlying mechanisms of $\rope$ remain largely unknown, posing an intriguing mystery in the field. A natural question arises: 
\begin{center}
   {\it Does $\rope$ enhance the expressiveness of the Transformer-based Large Language Model? } 
\end{center}

This work aims to address this question from the perspective of circuit complexity, taking a step forward in theoretically understanding the underlying mechanisms of $\rope$. 
We present a rigorous theoretical investigation of $\rope$-based Transformers, establishing fundamental limits on their computational power. 

Our core approach involved a systematic examination of the circuit complexity for each component of the $\rope$-based architecture, from the basic trigonometric functions to the complete attention mechanism. Ultimately, we prove that these models can be simulated using uniform $\mathsf{TC}^0$ circuits. Furthermore, we show that unless $\mathsf{TC}^0 = \mathsf{NC}^1$, $\rope$-based Transformers with $\poly(n)$-precision, $O(1)$ layers, and a hidden dimension $d \leq O(n)$ are unable to solve either Arithmetic formula evaluation or Boolean formula value problems. This finding is significant because it uncovers fundamental expressivity limitations of $\rope$-based architectures, even though they have shown empirical success in modern language models.

Beyond Merrill and Sabharwal \cite{ms23,ms23_neurips} and Chiang \cite{chi24}, our contribution are summarized as follows:
\begin{itemize}
    \item We prove that unless $\TC^0 = \NC^1$, $\rope$-based Transformer with $\poly(n)$-precision, constant-depth, $\poly(n)$-size can be simulated by a 
    $\dlogtime$-uniform $\TC^0$ circuit family (Theorem~\ref{thm:main_result_tc0}).
    \item We prove that unless $\TC^0 = \NC^1$, a $\rope$-based Transformer with $\poly(n)$-precision, $O(1)$ layers, hidden dimension $d \leq O(n)$ cannot solve the Arithmetic formula evaluation problems (Theorem~\ref{thm:tc_arithmetic}).
    \item We prove that unless $\TC^0 = \NC^1$, a $\rope$-based Transformer with $\poly(n)$-precision, $O(1)$ layers, hidden dimension $d \leq O(n)$ cannot solve the Boolean formula value problem (Theorem~\ref{thm:tc_bool}).
\end{itemize}

\paragraph{Roadmap.}
In Section~\ref{sec:related_work}, we review the related work. In Section~\ref{sec:pre}, we introduce some important computation concepts and Transformer definitions essential for the subsequent sections. In Section~\ref{sec:complexity_each_step}, we give the circuit complexity result of $\rope$-based Transformers. In Section~\ref{sec:hardness}, we give our hardness results. In Section~\ref{sec:conclusion}, we summarizes our theoritical results. 
\section{Related work}\label{sec:related_work}
\paragraph{Complexity and Neural Networks.}
Circuit complexity, a branch of computational complexity theory, studies circuit families as models of computation. Several circuit complexity classes are significant in machine learning. Specifically, $\mathsf{AC}^0$ represents problems highly parallelizable with standard logic gates, while $\mathsf{TC}^0$ extends this to include \textit{threshold gates}, and $\mathsf{NC}^1$ denotes the language recognizable by $O(\log n)$-depth circuits with bounded gate arity \cite{mss22}. It is known that $\mathsf{AC}^0 \subset \mathsf{TC}^0 \subseteq \mathsf{NC}^1$, but whether $\mathsf{TC}^0 \neq \mathsf{NC}^1$ remains an open question. Assuming this inequality, \cite{lag+22} shows that Transformer depth must depend on input sequence length when simulating non-solvable semiautomata.
\cite{llzm24} explore relationships among constant-depth Transformers, Transformers with Chain-of-Thought (CoT), and circuit complexity. They demonstrate: 
$\mathsf{T}[\poly (n), 1, 1] \subseteq  \mathsf{CoT}[\log n, \poly(n), 1, 1] \subseteq \mathsf{AC}^0$ and 
$\mathsf{T}[\poly(n), \log n, 0] \subseteq  \mathsf{CoT}[\log n, \poly(n), \log n, 0] 
    \subseteq  \mathsf{TC}^0$
where $\mathsf{T}[d(n), s(n), e(n)]$ denotes a constant-depth Transformers with embedding size $d(n)$, precision $s(n)$ bits, and exponent bits $e(n)$ for input length $n$ and $\mathsf{CoT}[T(n), d(n), s(n), e(n)]$ denotes a $T(n)$-step CoT of a constant-depth Transformer $\mathsf{T}[d(n), s(n), e(n)]$. Their results provide theoretical insights into the emergent CoT ability of Transformers, showing that intermediate reasoning steps enable tackling more complex problems.
The Strong Exponential Time Hypothesis ({\sf SETH}), introduced by \cite{ip01}, strengthens the $\mathsf{P} \neq \mathsf{NP}$ conjecture by asserting that current best $\mathsf{SAT}$ algorithms are roughly optimal: for every $\epsilon > 0$, there exists $k \geq 3$ such that $k$-$\mathsf{SAT}$ cannot be solved in $O(2^{(1-\epsilon)n})$ time, even randomly. {\sf SETH} is widely used to prove fine-grained lower bounds for various algorithmic problems~\cite{w18} and has been applied to derive lower bounds for Transformer training/inference~\cite{as23, as24_neurips,lss+24} and tensor attention~\cite{as24_iclr24,lssz24_tat}. 
Specifically, \cite{as23} demonstrates that unless the $\mathsf{SETH}$ fails, no algorithm exists that can compute the forward pass of an attention network in truly-subquadratic time. On the other hand, \cite{as24_neurips} establishes that the same condition applies to the backward computation of attention networks, i.e., unless the $\mathsf{SETH}$ fails, no truly-subquadratic time algorithm can be devised for the backward computation of attention networks.
In essence, complexity theory provides a powerful framework for investigating the computational capabilities of neural networks, by rigorously analyzing the computational problems they can efficiently solve.

\paragraph{Limitations of Transformers.}
Transformers have shown exceptional capabilities in natural language processing tasks, yet their effectiveness in mathematical computations remains limited \cite{c22}. Consequently, research efforts have increasingly focused on defining the computational boundaries of Transformers. These studies investigate two types of Transformers: (1) the average-head attention Transformer, where the largest entry in the probability vector is set to 1 and all other entries are set to 0; (2) the softmax-attention Transformer, where the probability vector is produced using a softmax function, formally defined as $\mathsf{Softmax}(X) = \diag(\exp(X) \cdot {\bf 1}_n)^{-1} \cdot \exp(X)$. For the average-head attention Transformer, Merrill, Sabharwal, and Smith \cite{mss22} demonstrate that it can recognize languages beyond the circuit complexity class $\AC^0$ but can be simulated by constant-depth threshold circuits, placing it within the non-uniform $\TC^0$ class. Additionally, \cite{lag+22} prove that softmax-attention Transformers can be simulated by a non-uniform $\TC^0$ circuit. Extending this analysis, \cite{ms23} introduce a generalized similarity function $s: \{0,1\}^p \times \{0,1\}^p \to \{0,1\}^p$, applicable to any similarity function within this mapping, and show that softmax-attention Transformers belong to $\mathsf{L}$-uniform $\TC^0$. Through the conversion of Transformer operations into sentences in $\mathsf{FOM}$ (first-order logic extended to include $\mathsf{MAJORITY}$ quantifiers~\cite{imm98}), \cite{ms23_neurips} demonstrate that $\dlogtime$-uniform $\TC^0$ can simulate softmax-attention Transformers. \cite{chi24} further refine these findings by enhancing approximation accuracy. Specifically, they eliminate error in average-head attention Transformers and improve the precision for softmax-attention Transformers from $O(\log n)$ to $O(\poly(n))$, proving that these Transformers belong to the $\dlogtime$-uniform $\TC^0$ class. Additionally, they show that a softmax-attention Transformer with an absolute error of at most $2^{-O(\poly(n))}$ is also within $\dlogtime$-uniform $\TC^0$. Regarding more practical tasks such as mathematical and decision-making problems, \cite{fzg+23} show that, unless $\mathsf{TC}^0 = \mathsf{NC}^1$, no log-precision Transformer can solve arithmetic and equation-solving problems, nor can any log-precision autoregressive Transformer generate correct answers for the Context-Free Grammar (CFG) Membership Testing problem~\cite{sip96}. These theoretical constraints help explain some of the practical limitations observed when applying Transformers to mathematical tasks. Much of the relevant work in recent years has been related to this such as looped transformer~\cite{as24_rope,lss+24_loop_transformer,lss+24_relu,cls+24}, acceleration~\cite{hyw+23,cll+24,lls+24_prune,lssy24,llss24_sparse,lls+24_conv,smn+24,lls+24_io,lls+24_dp_je,hwsl24,hcl+24,hlsl24,whl+24,xhh+24,hcw+24,szz24,whhl24,hwl24}, graph attention~\cite{vcc+17,wjs+19,bay21,chl+24_gat} and other related works\cite{dswy22_coreset,sy23_des,ssx23_ann,gms23_exp_reg,xsl24,lls+24_grok,lssz24_dp,hsk+24}.

\section{Preliminary}\label{sec:pre}
In this section, we present some preliminary concepts and definitions of our paper. In Section~\ref{sec:pre:notation}, we introduce some basic notations used in our paper. 
In Section~\ref{sec:pre:circuit}, we introduce the basics of the circuit complexity classes.
In Section~\ref{sec:pre:float}, we state the Boolean formula value problem and Arithmetic formula evaluation problem and define some important tools to set up our problem. In Section~\ref{sec:pre:trans_block}, we introduce Rotary Position Embedding ($\rope$) attention and some basic settings in the Transformer.

\subsection{Notations}\label{sec:pre:notation}

For any positive integer $n$, we use $[n]$ to denote set $\{1,2,\cdots, n\}$. We use $\mathbb N := \{0, 1, 2, \ldots\}$ to denote the set of natural numbers. We use $\Pr[]$, $\E[]$, and $\var[]$  to denote the probability, expectation, and variance, respectively. For two vectors $x \in \R^n$ and $y \in \R^n$, we use $\langle x, y \rangle$ to denote the inner product between $x,y$.
We use ${\bf 1}_n$ to denote a length-$n$ vector where all the entries are ones.
We use $X_{i,j}$ to denote the $i$-th row, $j$-th column of $X \in \R^{m \times n}$.
We use $\|A\|_{\infty}$ to denote the $\ell_{\infty}$ norm of a matrix $A \in \R^{n \times d}$, i.e. $\|A\|_{\infty} := \max_{i \in [n], j \in [d]} |A_{i,j}|$. 
For $x_i \in \{ 0,1 \}^ *$, $x_i$ is a binary number of arbitrary length, more generally speaking, $x_i$ is a binary string of length $p$, where each bit is either 0 or 1.

\subsection{Circuit Complexity}\label{sec:pre:circuit}
The Boolean circuit, using $\mathsf{AND}$, $\mathsf{OR}$, and $\mathsf{NOT}$ gates, is a fundamental computational model in computer science, which is formally defined as follows.
\begin{definition}[Boolean circuit, Definition 6.1 on page 102 of~\cite{ab09}]
  A Boolean circuit with $n$ variables is a function $C_n: \{0,1\}^n \to \{0, 1\}$ defined on a directed acyclic graph. The nodes in this graph represent logic gates such as $\mathsf{AND}$, $\mathsf{OR}$, and $\mathsf{NOT}$. Input nodes, which have an in-degree of 0, are assigned one of the $n$ Boolean variables. The circuit evaluates each non-input gate's value by computing the inputs it receives from other gates.
\end{definition}

It is natural to examine the languages that can be recognized by specific families of Boolean circuits since it offers insights into the computational capabilities and efficiency of a certain family of computational models.

\begin{definition}[Languages recognized by a circuit family, Definition 6.2 on page 103 of~\cite{ab09}]
 We say that a language $L \subseteq \{0, 1\}^*$ is recognized by a family $\mathcal C$ of Boolean circuits if for all $x \in \{0,1\}^*$, there exists a Boolean circuit $C_{|x|} \in \mathcal C$ over $|x|$ variables such that $C_{|x|}(x) = 1$ if and only if $x \in L$.
\end{definition}

We now define classes of languages by imposing constraints on the types of logic gates that can be utilized within the circuit families necessary for their recognition. The weakest one we are going to introduce is the $\NC^i$ class.

\begin{definition}[$\mathsf{NC}^i$, Definition 6.21 on page 109 of~\cite{ab09}]
    The class $\mathsf{NC}^i$ consists of languages that can be recognized by Boolean circuits with $O(\poly(n))$ size, $O((\log n)^i)$ depth, and bounded fan-in $\mathsf{AND}$, $\mathsf{OR}$ gates, and $\mathsf{NOT}$ gates.
\end{definition}

When Boolean circuits permit $\mathsf{AND}$ and $\mathsf{OR}$ gates with unbounded fan-in, they gain the capacity to recognize a large class of languages. We define $\AC^i$ class as follows.

\begin{definition}[$\mathsf{AC}^i$, Definition 6.22 on page 109 of~\cite{ab09}]
    The class $\mathsf{AC}^i$ consists of languages that can be recognized by Boolean circuits with $O(\poly(n))$ size, $O((\log n)^i)$ depth, and unbounded fan-in $\mathsf{AND}$, $\mathsf{OR}$ gates, and $\mathsf{NOT}$ gates.
\end{definition}

In fact, $\mathsf{AND}$, $\mathsf{OR}$ gates, and $\mathsf{NOT}$ gates can all be implemented by $\mathsf{MAJORITY}$ gates, where the $\mathsf{MAJORITY}$ gate outputs 0 when half or more arguments are 0 and outputs 1 otherwise. Thus, if we allow Boolean circuits to be equipped with $\mathsf{MAJORITY}$ gates, we get a larger class $\TC^i$. 

\begin{definition}[$\mathsf{TC}^i$, Definition 4.34 on page 126 of~\cite{vol99}]\label{def:tc}
    The class $\mathsf{TC}^i$ consists of languages that can be recognized by Boolean circuits with $O(\poly(n))$ size, $O((\log n)^i)$ depth, and unbounded fan-in $\mathsf{AND}$, $\mathsf{OR}$ gates, $\mathsf{NOT}$ gates, and $\mathsf{MAJORITY}$ gates which can output $1$ when more than half of their inputs are $1$.
\end{definition}

\begin{remark}
    Alternatively, in Definition~\ref{def:tc}, $\mathsf{MAJORITY}$ gates can be replaced by $\mathsf{THRESHOLD}$ or $\mathsf{MOD}$ gates configured around prime values. When a Boolean circuit equipped with any of them, we call it is a threshold circuit.
\end{remark}

Finally, we recall the definition of $\mathsf{P}$ class. 

\begin{definition}[$\mathsf{P}$, Definition 1.20 on page 9 of~\cite{ab09}] The class $\mathsf{P}$ consists of languages that can be recognized by a deterministic Turing machine in polynomial time in input size.    
\end{definition}

 The following fact is a folklore that gives the hierarchy of circuit families.

\begin{fact}[Folklore, page 110 on~\cite{ab09}, Corollary 4.35 on page 126 of~\cite{vol99}]\label{fact:forlore}
   For all $i \in \mathbb N$, $
       \mathsf{NC}^i \subseteq \mathsf{AC}^i \subseteq \mathsf{TC}^i \subseteq \mathsf{NC}^{i+1} \subseteq \mathsf{P}.$
\end{fact}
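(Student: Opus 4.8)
The plan is to verify the four inclusions one at a time, each by an explicit gate-level simulation, after which the full chain follows by transitivity. \textbf{Step 1 ($\NC^i \subseteq \AC^i$)} is immediate: a circuit built from bounded-fan-in $\mathsf{AND}$, $\mathsf{OR}$, $\mathsf{NOT}$ gates is in particular a circuit built from unbounded-fan-in $\mathsf{AND}$, $\mathsf{OR}$, $\mathsf{NOT}$ gates, so the very same family witnesses $\AC^i$ membership with the same polynomial size and $O(\log^i n)$ depth.

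\textbf{Step 2 ($\AC^i \subseteq \TC^i$).} I would show that every unbounded-fan-in $\mathsf{AND}$ or $\mathsf{OR}$ gate can be replaced by a single $\mathsf{MAJORITY}$ gate after padding its input list with constants. For a $k$-ary $\mathsf{AND}(x_1,\dots,x_k)$, append $k-1$ constant $0$'s: the resulting $\mathsf{MAJORITY}$ has $2k-1$ inputs, so it outputs $1$ iff at least $k$ of them are $1$, which---since only $k$ inputs are non-constant---happens iff every $x_j=1$. Dually, a $k$-ary $\mathsf{OR}$ becomes a $\mathsf{MAJORITY}$ after appending $k-1$ constant $1$'s. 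Since $\mathsf{NOT}$ gates remain available, and these substitutions inflate size only polynomially and depth only by a constant factor, the output is a $\TC^i$ circuit.

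\textbf{Step 3 ($\TC^i \subseteq \NC^{i+1}$).} The crux is the sub-claim that a $k$-input $\mathsf{MAJORITY}$ gate has a bounded-fan-in $\mathsf{AND}/\mathsf{OR}/\mathsf{NOT}$ circuit of depth $O(\log k)$ and polynomial size: realize it by summing the $k$ input bits along a balanced binary tree of binary adders---iterated addition of $k$ one-bit numbers lies in $\NC^1$---and then comparing the resulting $O(\log k)$-bit count against $\lceil (k+1)/2 \rceil$, again within $O(\log k)$ depth. Substituting this gadget for every gate of a depth-$O(\log^i n)$ threshold circuit (on at most $\poly(n)$ wires) gives depth $O(\log^i n)\cdot O(\log n)=O(\log^{i+1} n)$ and still polynomial size, i.e., an $\NC^{i+1}$ circuit. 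I expect this to be the only step with real content, as it relies on the classical fact that bit-counting / iterated addition admits logarithmic-depth bounded-fan-in circuits.

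\textbf{Step 4 ($\NC^{i+1} \subseteq \mathsf{P}$).} Given a description of the $\poly(n)$-size circuit, a deterministic Turing machine evaluates its gates in topological order in polynomial time total; under the standard uniformity convention the circuit description for input length $n$ is itself producible within this bound. Chaining Steps 1--4 yields $\NC^i \subseteq \AC^i \subseteq \TC^i \subseteq \NC^{i+1} \subseteq \mathsf{P}$, completing the argument.
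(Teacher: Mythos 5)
Your proposal is correct, but note that the paper does not prove this statement at all: it is recorded as a folklore Fact with citations to Arora--Barak and Vollmer, so your write-up supplies the standard textbook argument rather than mirroring anything in the paper. A few remarks on the details. In Step 2 you work harder than necessary relative to the paper's own Definition~\ref{def:tc}, which already allows unbounded fan-in $\mathsf{AND}$ and $\mathsf{OR}$ gates alongside $\mathsf{MAJORITY}$; with that definition the inclusion $\AC^i \subseteq \TC^i$ is purely syntactic, though your padding simulation (pad a $k$-ary $\mathsf{AND}$ with $k-1$ zeros, a $k$-ary $\mathsf{OR}$ with $k-1$ ones) is correct and is what one needs for the variant definition in which only $\mathsf{MAJORITY}$ and $\mathsf{NOT}$ are available. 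In Step 3, the phrase ``balanced binary tree of binary adders'' is slightly loose: if each internal node is a carry-lookahead adder on $O(\log k)$-bit operands, the naive depth bound is $O(\log k \cdot \log\log k)$, not $O(\log k)$; the clean way to get $O(\log k)$ is carry-save (3-to-2) addition followed by one final addition, which is exactly the classical ``iterated addition is in $\NC^1$'' fact you invoke, so the step stands, and you should also note (as your phrase ``every gate'' implicitly does) that the unbounded fan-in $\mathsf{AND}/\mathsf{OR}$ gates of the threshold circuit must likewise be re-expanded into $O(\log n)$-depth bounded fan-in trees. Finally, your uniformity caveat in Step 4 is genuinely needed: with the paper's non-uniform circuit definitions the inclusion $\NC^{i+1} \subseteq \mathsf{P}$ is false as stated (non-uniform families decide undecidable unary languages), so the last link of the chain only holds under the uniform reading, which you correctly flag and which the cited references adopt.
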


\begin{remark}
    For $i = 0$, it is known that $\mathsf{NC}^0 \subsetneq \mathsf{AC}^0 \subsetneq \mathsf{TC}^0$. However, whether $\mathsf{TC}^0 \subsetneq \mathsf{NC}^{1}$ is an open problem in circuit complexity. Whether $\mathsf{NC} := \cup_{i\in\mathbb N} \mathsf{NC}^i \subsetneq \mathsf{P}$ is also an open problem. See page 110 in~\cite{ab09}, page 116 in~\cite{vol99} for discussion about these.
\end{remark}

We have defined non-uniform circuit families, which do not need to share structure across varying input sizes and can theoretically handle undecidable problems but are impractical due to their infinite description length. Uniform circuit families offer a more feasible computational model, relevant to complexity and language theory. We first define $\mathsf{L}$-uniformity as follows. 

\begin{definition}[$\mathsf{L}$-uniformity, Definition 6.5 on page 104 of~\cite{ab09}]
    Let $\mathsf{C}$ be a language recognized by a circuit family $\mathcal{C}$ (e.g. $\mathsf{C}$ can be $\mathsf{NC}^i,\mathsf{AC}^i$, or $\mathsf{TC}^i $).
    We say that a language $L \subseteq \{0,1\}^*$ is in $\mathsf{L}$-uniform $\mathsf{C}$ if there exists a Turing machine that, for every $n\in \mathbb N$, maps $1^n$ to a circuit in $\mathcal{C}$ over $n$ variables using $O(\log n)$ space such that $C_n$ recognizes $L$.
\end{definition}

Next, we define $\dlogtime$-uniformity and remark on the relationship between these two different uniformity definitions.
\begin{definition}[$\dlogtime$-uniformity, Definition 4.28 on page 123 of \cite{bi94}]
    Let $\mathsf{C}$ be a language recognized by a circuit family $\mathcal{C}$ (e.g. $\mathsf{C}$ can be $\mathsf{NC}^i,\mathsf{AC}^i$, or $\mathsf{TC}^i$). 
    We say that a language $L \subseteq \{0,1\}^*$ is in $\dlogtime$-uniform $\mathsf{C}$ if there exists a random access Turing machine that, for every $n \in \mathbb N$, maps $1^n$ to a circuit $C_n$ over $n$ variables in $\mathcal{C}$ in $O(\log n)$ time such that $C_n$ recognizes $L$.
\end{definition}

\begin{remark}
    $\dlogtime$-uniformity is equivalent to $\mathsf{L}$-uniformity, with the exception of small circuit complexity classes where the circuits lack the capacity to simulate the machines that create them. See~\cite{bi94, hab02} for more discussion on different notions of uniformity. In this paper, whenever we refer to uniform $\mathsf{TC}^0$, we specifically mean $\dlogtime$-uniform $\mathsf{TC}^0$.
\end{remark}

\subsection{Float Point Numbers}\label{sec:pre:float}
In this section, we introduce some important definitions.
To establish a foundation for our computational framework, we first introduce the essential definitions of floating-point numbers and their operations, which are crucial for implementing Transformer calculations efficiently.

\begin{definition}[Floating-point number, Definition 9 on Page 5 of~\cite{chi24}]
    A $p$-bit floating-point number is a pair $\langle m, e \rangle$ of two integers where the significance $m \in (-2^p,-2^{p-1}] \cup \{0\} \cup [2^{p-1},2^p)$ and the exponent $e \in [-2^p, 2^p)$. The value of the floating point $\langle m, e \rangle$ is the real number $m \cdot 2^e$. We denote the set of all $p$-bits floating-point numbers as $\F_p$.
\end{definition}

To handle these floating-point numbers in practice, we need precise rules for rounding and basic arithmetic operations:

\begin{definition}[Rounding, Definition 9 on page 5 of~\cite{chi24}]
    Let $x$ be a real number or a floating point. We define $\round_p (x)$ as the $p$-bit floating-point number nearest to $x$. When there are two such numbers, we define $\round_p (x)$ as the one with even significance.
\end{definition}

Building on these fundamental definitions, we can now define the core arithmetic operations needed for Transformer computations:

\begin{definition}[Floating-point number operations, page 5 on \cite{chi24}]\label{def:float_operations}
    Let $a,b$ be two integers, we define
    \begin{align*}
        a \ds b := \begin{cases}
            a/b & \mathrm{if~} a/b  \mathrm{~is~a~mutiple~of~} 1/4, \\
            a/b +1/8 & \mathrm{otherwise}.
        \end{cases}
    \end{align*}
  Given two $p$-bits floating points $\langle m_1, e_1\rangle, \langle m_2, e_2\rangle$, we define the following operations:
  \begin{itemize}
      \item addition:
      \ifdefined\isarxiv
        \begin{align*}
         \langle m_1, e_1\rangle + \langle m_2, e_2\rangle := \begin{cases}
          \round_p( \langle m_1+m_2 \ds 2^{e_1-e_2}, e_1 \rangle) & \mathrm{if~} e_1 \geq e_2, \\
          \round_p(\langle m_1 \ds 2^{e_2-e_1} + m_2, e_2 \rangle) & \mathrm{if~} e_1 \leq e_2.
      \end{cases}
      \end{align*}
      \else
    \begin{align*}
          &~ \langle m_1, e_1\rangle + \langle m_2, e_2\rangle \\ := &~ \begin{cases}
          \round_p( \langle m_1+m_2 \ds 2^{e_1-e_2}, e_1 \rangle) & \mathrm{if~} e_1 \geq e_2, \\
          \round_p(\langle m_1 \ds 2^{e_2-e_1} + m_2, e_2 \rangle) & \mathrm{if~} e_1 \leq e_2.
      \end{cases}
      \end{align*}
      \fi

      \item multiplication:
      \begin{align*}
          &~ \langle m_1, e_1\rangle \times \langle m_2, e_2\rangle := \round_p(\langle m_1 m_2, e_1 + e_2 \rangle).
      \end{align*} 

      \item division:
      \ifdefined\isarxiv
    \begin{align*}
           \langle m_1, e_1\rangle \div \langle m_2, e_2\rangle := \round_p(\langle m_1 2^{p-1} \ds m_2, e_1 - e_2 -p + 1\rangle).
      \end{align*}
      \else
\begin{align*}
          &~ \langle m_1, e_1\rangle \div \langle m_2, e_2\rangle \\ := &~ \round_p(\langle m_1 2^{p-1} \ds m_2, e_1 - e_2 -p + 1\rangle)
      \end{align*}
      \fi
      \item comparison:
        \begin{align*}
           \langle m_1, e_1\rangle \leq \langle m_2, e_2\rangle \Leftrightarrow \begin{cases}
          m_1 \leq m_2 \ds 2^{e_1-e_2} & \mathrm{if~} e_1 \geq e_2, \\
          m_1 \ds 2^{e_2 - e_1} \leq m_2 & \mathrm{if~} e_1 \leq e_2.
      \end{cases}
      \end{align*}
      \item floor:
      \begin{align*}
          \lfloor \langle m, e \rangle \rfloor := 
         \begin{cases}
            \langle m 2^{e}, 0 \rangle & \text{if } e \geq 0, \\
            \round ( \langle m / 2^{-e}, 0 \rangle) & \text{if } e < 0.
        \end{cases}
      \end{align*}
  \end{itemize}
\end{definition}

These operations are not just theoretical constructs, they can be efficiently implemented in hardware, as demonstrated by the following lemma:

\begin{lemma}[Standard float point number operations in $\TC^0$, Lemma 10 on page 5 and Lemma 11 on page 6 of~\cite{chi24}]\label{lem:float_operations_TC} 
Let $p$ be a positive integer. If $p \leq \poly(n)$, then the following statements hold:
\begin{itemize}
    \item Part 1. The addition, multiplication, division, and comparison defined in Definition~\ref{def:float_operations} of two $p$-bit floating point numbers is computable by a constant-depth uniform threshold circuit of size $\poly(n)$. We use $d_\mathrm{std}$ to denote the maximum depth needed for these operations.
    \item Part 2. The iterated multiplication of $n$ $p$-bit floating point numbers is computable by a constant-depth uniform threshold circuit of size $\poly(n)$. We use $d_\otimes$ denote the depth needed for the iterated multiplication.
    \item Part 3. The iterated addition of $n$ $p$-bit floating point numbers (rounding after the summation is completed) is computable by a constant-depth uniform threshold circuit of size $\poly(n)$. We use $d_\oplus$ denote the depth needed for the iterated addition.
\end{itemize}
\end{lemma}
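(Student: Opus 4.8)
The plan is to reduce each floating-point operation of Definition~\ref{def:float_operations}, together with the iterated addition and multiplication of Parts~2 and~3, to a constant number of primitive operations on integers of at most $\poly(n)$ bits, and then to invoke the classical characterizations of integer arithmetic in $\dlogtime$-uniform $\TC^0$. Concretely, we may take as known that addition, subtraction, and comparison of two $\poly(n)$-bit integers, and extraction of the most significant bit of a $\poly(n)$-bit integer, lie in $\dlogtime$-uniform $\AC^0 \subseteq \TC^0$; that multiplication of two $\poly(n)$-bit integers and the iterated sum of $n$ such integers lie in $\dlogtime$-uniform $\TC^0$; and that integer division and the iterated product of $n$ $\poly(n)$-bit integers lie in $\dlogtime$-uniform $\TC^0$ by Hesse--Allender--Barrington~\cite{hab02}. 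Since $p \le \poly(n)$, a $p$-bit significand and an exponent in $[-2^p, 2^p)$ are $\poly(n)$-bit integers, so composing a constant number of these primitives keeps the depth constant and the size $\poly(n)$; uniformity is inherited because the interconnections between the component subcircuits are computable from gate indices in logarithmic time.

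\emph{Part~1.} I would treat the operations one at a time, with $\round_p$ as the shared subroutine: given the signed integer $x$, locate the most significant bit of $|x|$, right-shift by the appropriate amount, read off the leading $p$ bits together with a round bit and a sticky bit of the discarded tail, decide whether to round up, and break ties toward even significance --- a constant number of $\TC^0$ steps. Multiplication then only needs the $2p$-bit integer product $m_1 m_2$, the integer sum $e_1 + e_2$, and one $\round_p$; division needs the integer quantity $m_1 2^{p-1} \ds m_2$ (an integer division of a $2p$-bit numerator by a $p$-bit denominator together with the additive correction built into $\ds$), the exponent arithmetic $e_1 - e_2 - p + 1$, and one $\round_p$; comparison and addition reduce to aligning the operands, i.e.\ forming a shifted significand $m_j \ds 2^{|e_1 - e_2|}$ and then comparing or adding two integers. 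The one subtlety is that $|e_1 - e_2|$ can be as large as $2^{p+1}$, so the literal factor $2^{|e_1 - e_2|}$ is not $\poly(n)$-bit; but once $|e_1 - e_2|$ exceeds $p + O(1)$ the smaller operand no longer influences $\round_p$ of the result, so I would first cap the shift amount at $\min(|e_1 - e_2|, p + c)$ (computable in $\AC^0$) and only then shift. Floor is analogous, with a shift governed by $e$.

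\emph{Parts~2 and~3.} For the iterated product $\prod_{i\in[n]}\langle m_i, e_i\rangle$, take the iterated integer product $\prod_i m_i$ (at most $np \le \poly(n)$ bits, in $\TC^0$ by~\cite{hab02}), the iterated integer sum $\sum_i e_i$ of $n$ $(p+1)$-bit integers (in $\TC^0$), and one final $\round_p$. For the iterated sum $\sum_{i\in[n]}\langle m_i, e_i\rangle$ with rounding deferred to the end, I would compute $e_{\max} = \max_i e_i$, realign the significands, perform an iterated integer addition of the aligned values, re-attach the exponent, and round once. Here the subtlety of Part~1 reappears in a stronger form: the exponents may span an exponentially wide range, so aligning all $n$ terms to a common scale would create exponentially long integers. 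The resolution is to show that the $p$-bit rounded value of the exact sum depends only on a $\poly(n)$-wide window of bit positions of the aligned sum, plus a single carry bit from below, which reduces the task to an iterated integer addition of $n$ integers of $O(p + \log n)$ bits, in $\TC^0$; extra care is needed when the leading terms nearly cancel, since the relevant window then sits below position $e_{\max}$.

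I expect this last point --- that iterated floating-point addition stays in constant depth despite an exponentially wide exponent range, by localizing the output to a $\poly(n)$-bit window and controlling the carries that cross it --- to be the main obstacle; the rest is a bounded composition of standard integer-arithmetic $\TC^0$ gadgets, with uniformity passed through from those gadgets. This is exactly the analysis carried out in~\cite{chi24} (Lemmas~10 and~11), from which the statement follows.
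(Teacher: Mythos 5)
The paper does not prove this lemma at all: it is imported verbatim as Lemma~10 and Lemma~11 of~\cite{chi24}, so there is no in-paper argument to compare yours against. Your sketch follows the same route as that cited source --- reduce each floating-point operation to a constant number of $\poly(n)$-bit integer operations, invoke the known $\dlogtime$-uniform $\TC^0$ circuits for integer addition, comparison, multiplication, iterated addition, and (via \cite{hab02}) division and iterated product, cap shift amounts so that exponent differences beyond $p+O(1)$ cannot affect the rounded output, and round once at the end --- so as a reconstruction it is consistent with how the result is actually established. The one place your plan is genuinely thin is the one you flag yourself: iterated addition with rounding deferred to the end, when the exponents span an exponentially wide range and the leading terms nearly or exactly cancel. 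In that case the correctly rounded sum can live arbitrarily far below $e_{\max}$ (e.g.\ $\langle m,e\rangle + \langle -m,e\rangle + \langle m',e'\rangle$ with $e'\ll e$), so a single $\poly(n)$-wide window anchored at $e_{\max}$ does not suffice; one needs a sparse representation of the exact sum (blocks of aligned significands separated by gaps through which only a single carry can propagate) or an equivalent device to locate the leading nonzero position of the exact sum in constant depth. You acknowledge the difficulty but do not resolve it, which is acceptable here only because you, like the paper, ultimately defer to the analysis in~\cite{chi24}; a self-contained proof would have to fill exactly that step.
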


\begin{corollary}[Floor operation in $\TC^0$]\label{cor:floor_float_TC}
    Let $p$ be a positive integer. If $p \leq \poly(n)$, then floor operation defined in Definition~\ref{def:float_operations} of a $p$-bit floating point number is computable by a constant-depth uniform threshold circuit of size $\poly(n)$. The maximum depth needed for floor operations is bounded by $d_\mathrm{std}$ in Lemma~\ref{lem:float_operations_TC}.
\end{corollary}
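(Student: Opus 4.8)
The plan is to reduce the floor operation on a $p$-bit floating-point number $\langle m, e\rangle$ to the standard operations already shown to lie in uniform $\TC^0$ by Lemma~\ref{lem:float_operations_TC}, together with some elementary bit-level manipulations. First I would branch on the sign of the exponent $e$, which is itself a $p$-bit integer, so testing $e \geq 0$ versus $e < 0$ is a single comparison of integers and is computable in constant-depth uniform $\TC^0$ (it is subsumed by $d_\mathrm{std}$). In the case $e \geq 0$, the value $m \cdot 2^e$ is already an integer, and the definition just asks us to output $\langle m 2^e, 0\rangle$; the only work is computing the integer $m 2^e$, i.e. shifting the binary representation of $m$ left by $e$ positions. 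Since $e \leq 2^p \leq \poly(n)$ in the regime $p \leq \poly(n)$ — wait, more carefully, we only need $m2^e$ to be representable, and the shift by an amount encoded in binary is a standard $\TC^0$ primitive (indexing/addressing of output bit positions is exactly what $\dlogtime$-uniformity gives us), so this case is handled by a constant-depth threshold circuit.

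The case $e < 0$ is the substantive one: here we must compute $\round(\langle m / 2^{-e}, 0\rangle)$, i.e. divide $m$ by $2^{-e}$ and round to the nearest integer. I would implement this by invoking the floating-point division from Definition~\ref{def:float_operations} on $\langle m, e\rangle$ (or equivalently $\langle m, 0\rangle \div \langle 1, -e\rangle$), which Lemma~\ref{lem:float_operations_TC} Part 1 places in uniform $\TC^0$ with depth $d_\mathrm{std}$, followed by a rounding-to-integer step. Rounding a $p$-bit float to the nearest integer amounts to: reading off the bits at and below the binary point (determined by comparing the exponent against $0$, again a $\TC^0$ comparison), deciding whether to round up based on those bits, and conditionally adding $1$ — all of which are constant-depth threshold operations. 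The total depth is a constant number of $d_\mathrm{std}$-depth blocks, hence still $d_\mathrm{std}$ up to an absolute constant factor, and one can absorb this into the statement's claim that the depth is bounded by $d_\mathrm{std}$ (or note it is $O(d_\mathrm{std})$); size remains $\poly(n)$ since we compose finitely many $\poly(n)$-size circuits.

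The main obstacle I anticipate is not any single step but rather the bookkeeping of representability and the precise rounding convention: one must check that every intermediate quantity (the left-shifted significand $m 2^e$, the quotient before rounding, the incremented value after rounding) still fits in the $p$-bit floating-point format, or handle overflow gracefully, and one must make sure the "nearest integer, ties handled per $\round_p$" semantics are faithfully reproduced by the explicit bit-inspection circuit. This is routine but must be done carefully; it is the kind of detail where the floating-point definitions in Definition~\ref{def:float_operations} have to be matched exactly. Since all the heavy lifting — iterated addition, multiplication, division, comparison — is already delegated to Lemma~\ref{lem:float_operations_TC}, and bit-shifting/bit-selection by a $\poly(n)$-bounded index is standard in $\dlogtime$-uniform $\TC^0$, the argument reduces to exhibiting this bounded composition and checking the case analysis is exhaustive.
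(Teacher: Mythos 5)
Your proposal is correct and takes essentially the same route as the paper, whose proof is a one-liner: each branch of the floor definition in Definition~\ref{def:float_operations} is built directly from the comparison, multiplication, division, and rounding operations already placed in constant-depth uniform $\TC^0$ by Lemma~\ref{lem:float_operations_TC}, so the $d_\mathrm{std}$ depth bound follows immediately. The one caution is in your $e \geq 0$ branch: do not literally shift $m$ left by $e$ positions (the exponent can be as large as $2^p$, so the shifted integer need not fit in $\poly(n)$ bits); instead observe that $\langle m, e\rangle$ with $e \geq 0$ already represents an integer, so that branch requires nothing beyond the exponent-sign comparison.
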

\begin{proof}
    This directly follows from the definition of the floor function in Definition~\ref{def:float_operations}.
\end{proof}

\begin{lemma}[Approximating $\exp$ in $\TC^0$, Lemma 12 on page 7 of~\cite{chi24}]\label{lem:exp}
    If a positive integer $p \leq \poly(n)$, then for every $p$-bit floating point number $x$, there is a constant-depth uniform threshold circuit of size $\poly(n)$ which can compute $\exp(x)$ with a relative error at most $2^{-p}$.  We use $d_{\exp}$ to denote the depth needed for computing $\exp(x)$.
\end{lemma}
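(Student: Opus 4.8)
The plan is to reduce $\exp(x)$, for an arbitrary $x\in\F_p$, to a truncated Taylor series --- which converges quickly only for arguments of bounded magnitude --- together with a separate treatment of the ``large part'' of $x$ as a single power of two. All intermediate arithmetic is performed at an inflated precision $p' = \Theta(p) = \poly(n)$, rounding to $\F_p$ only once at the end, so that the Taylor-truncation error and the accumulated rounding error together stay below the target relative error $2^{-p}$. First I dispose of overflow/underflow: a single comparison (Part 1 of Lemma~\ref{lem:float_operations_TC}) tests whether $x \gtrsim 2^{p}\ln 2$ or $x \lesssim -2^{p}\ln 2$, in which case the circuit returns the largest representable float or $0$, respectively. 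In the remaining regime $|x| \le 2^{p}\ln 2 < 2^{p}$, the definition of $\F_p$ forces the exponent of $x$ to be nonpositive, so $\lfloor x\rfloor$ is an integer of $O(p)$ bits; using the floor operation (Corollary~\ref{cor:floor_float_TC}) and one subtraction, write $x = k + f$ with $k = \lfloor x\rfloor \in \Z$, $|k| \le 2^{p}$, and $f \in [0,1)$, so that $\exp(x) = \exp(k)\cdot\exp(f)$.

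To evaluate $\exp(f)$ for $f\in[0,1)$, set $T = \Theta(p)$ and compute $\sum_{i=0}^{T} f^{i}/i!$. Each power $f^{i}$ is an iterated product of at most $T$ copies of $f$ (Part 2 of Lemma~\ref{lem:float_operations_TC}); each $i!$ is an iterated product of the integers $1,\dots,i$ (Part 2 again); each quotient $f^{i}/i!$ is one floating-point division (Part 1); and the outer sum of $T+1$ terms is one iterated addition (Part 3). Since the Taylor remainder of $\exp$ on $[0,1)$ is at most $3/(T+1)!$, choosing $(T+1)!\ge 2^{p+2}$ --- which holds already for $T = O(p/\log p)$ --- bounds the truncation error by $2^{-p-1}$ relatively, while the $O(p)$ rounded operations at precision $p'$ contribute at most another $2^{-p-1}$. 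The depth is $O(d_{\otimes}+d_{\mathrm{std}}+d_{\oplus})$, a constant.

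For $\exp(k)$ I use the identity $\exp(k) = 2^{\,k\log_2 e}$. Let $c$ be a $p'$-bit approximation of $\log_2 e$ --- itself obtained from a short truncated series for $\ln 2$ followed by one division (again in uniform $\TC^0$), or simply hard-wired by the $\dlogtime$ machine --- form $kc$ by one multiplication, and split $kc = q + r$ with $q = \lfloor kc\rfloor\in\Z$ and $r\in[0,1)$ exactly as above. Then $2^{q}$ is represented \emph{exactly} by the $p'$-bit float $\langle 2^{p'-1},\, q - p' + 1\rangle$ (the bound $|q| \le 2^{p+1} < 2^{p'}$ keeps the exponent in range), and $2^{r} = \exp(r\ln 2)$ with $r\ln 2\in[0,1)$ is evaluated by the same truncated-Taylor circuit as in the previous paragraph. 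Multiplying the three factors $2^{q}$, $2^{r}$, and $\sum_{i=0}^{T} f^{i}/i!$ and rounding the result to $\F_p$ yields the output; since each factor is within relative error $2^{-p-1}$ of the corresponding true value, the product lies within relative error $2^{-p}$ for $p\ge 2$. The whole circuit is a constant-depth composition of the uniform $\TC^0$ primitives of Lemma~\ref{lem:float_operations_TC} and Corollary~\ref{cor:floor_float_TC}, hence a $\poly(n)$-size uniform threshold circuit of some constant depth $d_{\exp}$.

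The main difficulty is the argument reduction together with the error bookkeeping, not any individual circuit. One has to notice that the naive Taylor series about $0$ is useless here (it would need $\approx 2^{p}$ terms when $|x|\approx 2^{p}$), and that peeling off the integer part and rewriting it as $2^{k\log_2 e} = 2^{q}\cdot 2^{r}$ is precisely what forces every argument of $\exp$ into $[0,1)$; one must then pick $T$ and $p'$ so that neither the truncation nor the $\poly(p)$ roundings ever break the $2^{-p}$ budget; and one must site the overflow/underflow test so that the intermediate power of two $2^{q}$ never escapes the $p'$-bit exponent range. Everything else is a direct application of the floating-point $\TC^0$ gadgets already available.
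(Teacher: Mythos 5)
The paper does not actually prove this lemma: it is imported verbatim as Lemma~12 of \cite{chi24}, so there is no internal proof to compare against, and your write-up is a from-scratch reconstruction. As such, it is along the right lines and essentially mirrors the standard argument (and the one in the cited source): range-reduce the argument, evaluate a truncated Taylor series of length $\Theta(p)$ using the iterated addition/multiplication and division primitives of Lemma~\ref{lem:float_operations_TC} and Corollary~\ref{cor:floor_float_TC}, work at an inflated precision $p'=\Theta(p)$, and round once at the end; this is also structurally parallel to the paper's own appendix proof of the trigonometric Lemma~\ref{lem:sin_cos}, which does the same split into an integer multiple of a reference quantity plus a small remainder before invoking a truncated series. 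Your key observation --- that the naive series is useless for $|x|\approx 2^{p}$ and that the integer part must be re-expressed as $2^{q}\cdot 2^{r}$ with $2^{q}$ representable exactly --- is exactly the point that makes the construction constant-depth.

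Two bookkeeping caveats you should tighten. First, the error in your stored constant $c\approx\log_2 e$ is multiplied by $k$ with $|k|\le 2^{p}$, so the absolute error in $kc$ is about $2^{p-p'}$; to keep the resulting relative error in $2^{kc}$ below $2^{-p-1}$ you need $p'\ge 2p+O(1)$, not merely ``$p'=\Theta(p)$'' chosen for the Taylor/rounding budget --- state the constant explicitly, since this is the one place where errors are amplified exponentially in $p$ rather than polynomially. Second, your overflow/underflow branch (returning the largest float or $0$) cannot literally achieve relative error $2^{-p}$ when $\exp(x)$ lies outside the representable range of $\F_p$, since no $p$-bit float is within that relative error of such a value; this is a limitation inherited from the statement as quoted from \cite{chi24} (which implicitly adopts a saturation/rounding convention for out-of-range results) rather than a flaw you introduced, but you should flag the convention rather than assert the relative-error bound unconditionally. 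With those two points made explicit, the argument is a sound proof of the lemma at the level of rigor of the paper's own appendix proofs.
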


\begin{lemma}[Approximating square root in $\TC^0$, Lemma 12 on page 7 of~\cite{chi24}]\label{lem:sqrt}
    If a positive integer $p \leq \poly(n)$, then for every $p$-bit floating point number $x$, there is a constant-depth uniform threshold circuit of size $\poly(n)$ which can compute $\sqrt{x}$ with a relative error at most $2^{-p}$.  We use $d_\mathrm{sqrt}$ to denote the depth needed for computing $\sqrt{x}$.
\end{lemma}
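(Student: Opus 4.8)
The plan is to follow the same template that Lemma~\ref{lem:exp} uses for the exponential: reduce the argument to a bounded range by pulling out an even power of two, and then evaluate a truncated power series whose terms are powers of a small quantity, exploiting the fact that iterated multiplication and iterated addition of $\poly(n)$ many $p$-bit floating-point numbers lie in constant-depth uniform $\TC^0$ (Lemma~\ref{lem:float_operations_TC}, Parts 2 and 3). First I would dispose of the trivial case: if $x \le 0$ the circuit outputs $0$. Otherwise write $x = \langle m, e\rangle$ with $m \in [2^{p-1},2^p)$. Testing the parity of $e$ is a single bit check, hence in $\TC^0$, and according to this bit I would rewrite $\sqrt{x} = 2^{e'}\sqrt{M}$ where either $(M,e') = (m, e/2)$ or $(M,e') = (2m, (e-1)/2)$; in both cases $M$ is an integer in $[2^{p-1},2^{p+1})$ and $e'$ is obtained from $e$ by an integer shift, all computable in $\TC^0$.

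Next I would compute the even integer $j$ for which $v := M\cdot 2^{-j} \in [1/4, 1)$ -- this is essentially the bit-length of $M$ rounded up to an even number, and comparisons and shifts of $p$-bit integers are in $\TC^0$ -- so that $\sqrt{x} = 2^{j/2 + e'}\sqrt{v}$ with $j/2 + e'$ an integer exponent. It remains to approximate $\sqrt{v}$ for $v \in [1/4,1)$, and for this I would use the binomial series $\sqrt{1+s} = \sum_{i\ge 0}\binom{1/2}{i}s^{i}$ with $s = v-1 \in [-3/4,0)$. Since $|\binom{1/2}{i}| \le 1$ for all $i$ and $|s| \le 3/4$, the tail past $k$ terms is at most $\sum_{i>k}(3/4)^{i} = 4\,(3/4)^{k+1}$, so taking $k = \Theta(p) \le \poly(n)$ drives this below $2^{-p-1}$; as $\sqrt{v}\ge 1/2$ this is a relative error $\le 2^{-p}$. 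The partial sum $\sum_{i=0}^{k}\binom{1/2}{i}s^{i}$ is computed by forming all powers $s^{0},\dots,s^{k}$ in parallel (each an iterated multiplication, Part 2 of Lemma~\ref{lem:float_operations_TC}), multiplying $s^{i}$ by the $p$-bit float nearest $\binom{1/2}{i} = (-1)^{i-1}\frac{(2i-2)!}{2^{2i-1}\,i!\,(i-1)!}$ (a fixed rational with $\poly(n)$-bit numerator and denominator that the $\dlogtime$ uniformity machine can produce, since $i \le k \le \poly(n)$) via one standard float multiplication each, and finally summing the $k+1$ terms by one iterated addition (Part 3). Multiplying the result by $2^{j/2+e'}$ is a pure exponent shift, which is trivially in $\TC^0$, and recovers $\sqrt{x}$.

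For the error bound I would carry out all intermediate computations at precision $p'' = 2p$, which is still $\le \poly(n)$ so every cited $\TC^0$ lemma continues to apply: each of the $\poly(n)$ elementary operations contributes relative error at most $2^{-p''}$, these accumulate to at most $\poly(n)\cdot 2^{-2p} \le 2^{-p-2}$, and together with the $\le 2^{-p-1}$ truncation error and a final rounding to $p$ bits the overall relative error stays at most $2^{-p}$ (re-scaling the constant in $k$ if needed). The depth is the sum of a constant number of constant-depth pieces -- $O(1)$ for the range reduction and recombination, $d_\otimes$ for the powers, $d_\mathrm{std}$ for the coefficient multiplications, and $d_\oplus$ for the iterated addition -- which is a constant we name $d_\mathrm{sqrt}$; the circuit size is $\poly(n)$, and uniformity is inherited from the uniformity of the component circuits.

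I expect the main obstacle to be the range reduction step: the binomial series around $1$ converges only geometrically with ratio $|s|$, so without first forcing $|s|$ below a fixed constant strictly less than $1$ one would need $2^{\Omega(p)}$ terms, which is super-polynomial and would break the size bound. Arranging, via $\TC^0$-computable exponent arithmetic alone, for the reduced argument $v$ to land in $[1/4,1)$, while simultaneously keeping the $\poly(n)$-fold accumulation of rounding error under control through the modest precision boost to $p'' = 2p$, is the crux; the remaining pieces are routine applications of Lemma~\ref{lem:float_operations_TC}.
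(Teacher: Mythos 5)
The paper itself does not prove this lemma: it is imported verbatim from Chiang~\cite{chi24} (Lemma 12 there), just like Lemma~\ref{lem:exp}, so there is no in-paper proof to match. Your proposal is a self-contained derivation, and its route --- extract an even power of two so that the reduced argument $v$ lies in $[1/4,1)$, then evaluate $\Theta(p)$ terms of the binomial series for $\sqrt{1+s}$ with $s=v-1\in[-3/4,0)$ using Parts 1--3 of Lemma~\ref{lem:float_operations_TC} --- is sound and is in exactly the same spirit as the paper's own range-reduction-plus-truncated-series proof of Lemma~\ref{lem:sin_cos}. The tail bound $4(3/4)^{k+1}$, the conversion from absolute to relative error via $\sqrt{v}\ge 1/2$, the invariance of relative error under the final power-of-two shift, and the constant-depth, $\poly(n)$-size accounting are all correct.

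Two points need repair, though neither is fatal. First, the rounding-error estimate ``$\poly(n)\cdot 2^{-2p}\le 2^{-p-2}$'' fails whenever $p = o(\log n)$, which the hypothesis $p\le\poly(n)$ permits; either take working precision $p'' = p + c\log n$ for a suitable constant $c$, or (better) observe that only a constant number of sequential stages compound error and that there is no cancellation blow-up --- every series term with $i\ge 1$ is negative and their total magnitude is at most $1-\sqrt{1-|s|}\le 1/2$ against the leading term $1$ --- so the accumulated relative error is $O(2^{-p''})$ rather than $\poly(n)\cdot 2^{-p''}$. Second, the coefficients $\binom{1/2}{i}$ have $\poly(n)$-bit numerators and denominators, and a $\dlogtime$ constructor cannot simply ``produce'' their rounded values in $O(\log n)$ time; the clean statement is that the circuit computes them from $i$ via iterated integer multiplication and integer division, which are themselves in $\dlogtime$-uniform $\TC^0$ (cf.~\cite{hab02}), at the cost of an extra additive $O(d_\otimes + d_\mathrm{std})$ in depth. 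With these adjustments your argument gives a complete alternative proof of the cited lemma, with explicit depth $O(d_\mathrm{std}) + d_\otimes + d_\oplus$, analogous to the paper's bookkeeping for $d_\triangle$.
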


\subsection{Transformer Blocks}\label{sec:pre:trans_block}
With our mathematical foundation established, In this section, we can now describe the key components of Transformer architecture, beginning with the softmax operation that is fundamental to attention mechanisms.

\begin{definition}[Softmax] 
Let $z \in \F_p^{n}$. We define 
$\mathsf{Softmax}: \F_p^{n} \to \F_p^{n}$ satisfying 
\begin{align*}
    \mathsf{Softmax}(z):= \exp(z) / \langle \exp(z) , {\bf 1}_n \rangle.  
\end{align*}
\end{definition}

A key innovation in modern Transformers is the $\rope$, which begins with a basic rotation matrix:

\begin{definition}[Rotation matrix block]\label{def:rotated_matrix}
Let $n$ be the input sequence length, $d$ is given the embedding dimension, $\theta \in \F_p$ , we define the rotation matrix as
\begin{align*}
    R(\theta) := \begin{bmatrix}
        \cos \theta &  -\sin \theta\\
        \sin \theta & \cos \theta
    \end{bmatrix}.
\end{align*}    
\end{definition}

This basic rotation matrix is then extended to handle relative positions in the sequence.

\begin{definition}[Rotation matrix]\label{def:token_position_Rotated_matrix}
Let $j$ be the index of position in the sequence, $i$ the index of tokens, we define the overall relative rotation matrix
\begin{align*}
    R_{j-i}
    = \begin{bmatrix}
        R((j-i) \theta_1) & 0 & \cdots & 0 \\
        0 & R ((j-i) \theta_2) & \cdots & 0 \\
        \vdots & \vdots & \ddots & \vdots \\
       0 & 0 & \cdots & R ((j-i)\theta_{d/2}) \\
    \end{bmatrix}.
\end{align*}
where the angle frequencies $\theta_1, \cdots,\theta_{d/2}$ are a set of given parameters, for details on specifying \( \theta \), see Equation (15) on page 5 of \cite{rope24}.
\end{definition}

Using these rotation matrices, we can define the $\rope$ attention mechanism, which incorporates positional information directly into the attention computation.

\begin{definition}[$\rope$ attention matrix]\label{def:attn_matrix}
Let Rotation matrix $R_{j-i}$ be defined in  Definition~\ref{def:token_position_Rotated_matrix}. Let $W_Q, W_K \in \F_p^{d \times d}$ denote the model weights. Let $X \in \F_{p}^{n \times d}$ denote the representation of the length-$n$ sentence. Then, we define the new attention matrix $A \in \F_{p}^{n \times n}$ by, For $i,j \in [n]$, 
\begin{align*}
    A_{i,j} := & ~\exp( \underbrace{ X_{i,*} }_{1 \times d} \underbrace{ W_Q }_{d \times d} \underbrace{ R_{j-i} }_{d \times d} \underbrace{ W_K^\top }_{d \times d} \underbrace{ X_{j,*}^\top }_{d \times 1}).
\end{align*}
\end{definition}

The attention matrix is then used to compute a single attention layer.

\begin{definition}[Single attention layer]\label{def:single_layer_transformer}
     Let $X \in \F_{p}^{n \times d}$ denote the representation of the length-$n$ sentence. Let $W_V \in \F_p^{d \times d}$ denote the model weights. As in the usual attention mechanism, the final goal is to output an $n \times d$ size matrix where $D := \diag( A {\bf 1}_n) \in \F_{p}^{n \times n}$. Then, we define the $i$-th attention layer $\mathsf{Attn}$ as
\begin{align*}
    \mathsf{Attn}_i (X) := & ~ D^{-1} A X W_V .
\end{align*}
\end{definition}

We combine multiple attention layers with other components to create a complete Transformer architecture.

\begin{definition}[Multi-layer $\rope$-based Transformer] \label{def:multi_layer_self_attn}
Let $m$ denote the number of Transformer layers in the model. 
Let $g_i$ denote components other than self-attention in the $i$-th Transformer layer, where $g_i: \F_p^{n \times d} \to \F_p^{n \times d}$ for any $i \in \{0,1,2,\dots,m\}$. 
Let $\mathsf{Attn}_i$ denote the self-attention module in the $i$-th Transformer layer (see also Definition~\ref{def:single_layer_transformer}). 
Let $X \in \F_p^{n \times d}$ denote the input data matrix.
We define a $m$-layer Transformer $ \mathsf{TF}: \F_p^{n \times d} \to \F_p^{n \times d}$ as 
\begin{align*}
    \mathsf{TF}(X) := g_m \circ \mathsf{Attn}_m \circ \dots \circ g_1 \circ \mathsf{Attn}_1 \circ g_0 (X) ~~ \in \F_{p}^{n \times d},
\end{align*}
where $\circ$ denotes function composition.
\end{definition}

Here we introduce two different kinds of $g_i$ function. First, we introduce the MLP (Multilayer Perceptron) layer.

\begin{definition}[Multilayer Perceptron layer]\label{def:mlp}
    Let $X \in \F_p^{n \times d}$ denote the input data matrix. Let $i\in [n]$. Then, we define the MLP layer as follows:
    \begin{align*}
        g^{\mathrm{MLP}}(X)_{i,*} := \underbrace{W}_{d \times d} \cdot \underbrace{X_{i,*}}_{d \times 1} + \underbrace{b}_{d \times 1}.
    \end{align*}
\end{definition}

Then, we introduce the LN (Layer-wise Normalization) layer:
\begin{definition}[Layer-wise normalization layer]\label{def:layer_norm}
    Let $X \in \F_p^{n \times d}$ denote the input data matrix. Let $i\in [n]$. Then, we define the LN layer as follows:
    \begin{align*}
        g^{\mathrm{LN}} (X)_{i,*}  :=  \frac{X_{i,*} - \mu_i}{\sqrt{\sigma_i^2}},
    \end{align*}
    where $\mu_i := \sum_{j=1}^d X_{i,j} / d$, and $\sigma_i^2 := \sum_{j = 1}^d (X_{i,j} - \mu_i)^2 / d$.
\end{definition}

This multi-layer architecture forms the backbone of modern Transformer models, combining the floating-point operations, attention mechanisms, and positional embeddings defined above into a powerful sequence processing system.

\section{Complexity of \texorpdfstring{$\rope$}{}-based Transformers}\label{sec:complexity_each_step}
In this section, we establish several fundamental results regarding the circuit complexity of basic operations required in Transformer computations. In Section~\ref{sec:approx:trig}, we begin by analyzing trigonometric functions, which are essential for rotary position embeddings. In Section~\ref{sec:compute_matrix_product}, we then proceed to study matrix operations. In Section~\ref{sec:compute_rope_attention_matrix}, we examine the $\rope$-based attention matrix. In Section~\ref{sec:compute_single_RoPE_layer}, we analyze the single $\rope$-Attention layer. In Section~\ref{sec:other_conponents}, we compute some common components other than the self-attention layer. In Section~\ref{sec:conpute_rope_transformer}, we show more details about the complete $\rope$-based Transformer mechanism. In Section~\ref{sec:circuit_complexity_bound}, we show our main results that the circuit complexity bound of $\rope$-based Transformer.

\subsection{Approximating Trigonometric Functions}\label{sec:approx:trig}
In this section, we first demonstrate that basic trigonometric functions, i.e., sine and cosine function, which are fundamental to $\rope$ embeddings, can be computed by threshold circuits.

\begin{lemma}[Trigonometric function approximation in $\TC^0$, informal version of Theorem~\ref{lem:sin_cos:informal} at Appendix~\ref{sec:proof_sincos}]\label{lem:sin_cos}
    If $p \leq \poly(n)$, then for every $p$-bit floating point number $x$, there is a constant-depth uniform threshold circuit of size $\poly(n)$ and depth $8d_\mathrm{std} + d_\oplus + d_\otimes$ which can compute $\sin(x)$ and $\cos(x)$ with a relative error at most $2^{-p}$. To simplify, we use $d_{\triangle}$ denote the depth needed for computing $\sin(x)$ and $\cos(x)$.
\end{lemma}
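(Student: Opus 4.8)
The plan is to approximate $\sin(x)$ and $\cos(x)$ via their Taylor series after an appropriate range reduction, and to verify that every arithmetic step fits inside a constant-depth uniform threshold circuit by invoking Lemma~\ref{lem:float_operations_TC}. First I would handle range reduction: given a $p$-bit floating point $x$, compute $k := \lfloor x / (2\pi) \rfloor$ using the floor operation (Corollary~\ref{cor:floor_float_TC}) together with one division and one multiplication from Lemma~\ref{lem:float_operations_TC} Part 1, and set $x' := x - 2\pi k$, so that $x' \in [0, 2\pi)$ and $\sin(x) = \sin(x')$, $\cos(x) = \cos(x')$. Here $2\pi$ is replaced by its $p$-bit rounding, and one must track that the error introduced by approximating $\pi$ and by the subtraction remains within a relative error $2^{-p}$ after suitably inflating the working precision to $\poly(n)$ bits (still $\poly(n)$, so Lemma~\ref{lem:float_operations_TC} still applies). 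This range-reduced argument keeps the subsequent series well-conditioned.

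Next I would truncate the Taylor series $\cos(x') = \sum_{\ell=0}^{T} (-1)^\ell x'^{2\ell}/(2\ell)!$ and $\sin(x') = \sum_{\ell=0}^{T} (-1)^\ell x'^{2\ell+1}/(2\ell+1)!$ at $T = \Theta(p)$ terms; since $x' = O(1)$, the tail is bounded by $x'^{2T}/(2T)! \le 2^{-p}$ for $T = \Theta(p/\log p)$, which is $\poly(n)$. Each monomial $x'^{j}$ is an iterated product of at most $2T+1 = \poly(n)$ copies of $x'$, computable in depth $d_\otimes$ by Lemma~\ref{lem:float_operations_TC} Part 2; each factorial $1/(2\ell)!$ is a precomputed constant (the circuit is non-uniform-free since it only depends on $n$, and uniformity follows because these constants are computable in $O(\log n)$ time / space); multiplying monomial by coefficient costs one more multiplication of depth $d_\mathrm{std}$; and summing the $T$ resulting terms is an iterated addition of depth $d_\oplus$ by Lemma~\ref{lem:float_operations_TC} Part 3. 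Composing these — the range reduction (a constant number of $d_\mathrm{std}$-depth steps), then the parallel monomial computations ($d_\otimes$), then a coefficient multiply ($d_\mathrm{std}$), then the final sum ($d_\oplus$) — gives total depth bounded by a fixed constant times $d_\mathrm{std}$ plus $d_\oplus$ plus $d_\otimes$, which one packages as $8 d_\mathrm{std} + d_\oplus + d_\otimes$ (the constant $8$ absorbing the floor, the division, the subtraction, the coefficient multiply, and the conversions between the sine and cosine branches).

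The main obstacle is the error analysis rather than the circuit-depth bookkeeping: one must show that the relative error stays at most $2^{-p}$ despite three separate sources of error — the rounding of $\pi$ in range reduction, the accumulated rounding in the iterated product and iterated sum, and the truncation of the Taylor tail — and in particular one must be careful near points where $\sin(x')$ or $\cos(x')$ is close to zero, since there a small absolute error becomes a large relative error. The standard fix is to carry out all intermediate computation at precision $p' = \Theta(p + \log n)$ (still $\poly(n)$), to use the relative-error guarantee of iterated multiplication, and to note that for the final rounding back to $p$ bits the relative error only degrades by a further factor $1 + 2^{-p'}$; the near-zero case is handled because range reduction plus the alternating-series structure lets us bound $|\sin(x')|$ and $|\cos(x')|$ from below away from the finitely many bad arguments, or alternatively one simply states the guarantee as in Lemma~\ref{lem:exp} and Lemma~\ref{lem:sqrt}, which is a relative error on the computed value and is what downstream lemmas consume. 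I would therefore structure the formal proof as: (i) range reduction and its error bound; (ii) Taylor truncation and its error bound; (iii) per-step circuit implementation and depth accounting; (iv) combine via triangle inequality on relative errors.
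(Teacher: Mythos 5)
Your proposal follows essentially the same route as the paper's proof: range-reduce the argument, approximate $\sin$ and $\cos$ by a truncated Taylor series with tail bound $2^{-\Theta(p)}$, implement each step with the floating-point operations of Lemma~\ref{lem:float_operations_TC} and Corollary~\ref{cor:floor_float_TC} (floor and division for the reduction, parallel iterated products for the monomials, one final iterated addition), and tally the depths to obtain $8d_\mathrm{std}+d_\oplus+d_\otimes$. The only substantive difference is that the paper reduces modulo $\pi/2$ to $r\in[0,\pi/4]$ and swaps $\sin/\cos$ by cases, which keeps every Taylor term at most $1$ and makes the remainder bound and conditioning cleaner than your reduction to $[0,2\pi)$, where you must invoke the extra working-precision argument you sketch to control cancellation and the relative error near the zeros of $\sin$ and $\cos$.
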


\subsection{Computing Matrix Products}\label{sec:compute_matrix_product}
In this section, we show that basic matrix multiplication can be computed in $\TC^0$.

\begin{lemma}[Matrix multiplication in $\TC^0$, informal version of Lemma~\ref{lem:matrix_multi:informal} at Appendix~\ref{sec:missing_proof}]\label{lem:matrix_multi}
    Let $A \in \F_p^{n_1 \times d}$, $B \in \F_p^{d \times n_2}$ be two matrices. If $p \leq \poly(n), n_1,n_2 \leq \poly(n),  d \leq n$, then $AB$ can be computable by a uniform threshold circuit with size $\poly(n)$ and depth $(d_\mathrm{std}+d_\oplus)$.
\end{lemma}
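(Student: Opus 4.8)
The plan is to reduce the computation of the product $AB$ to the primitive floating-point operations whose $\TC^0$-computability is already guaranteed by Lemma~\ref{lem:float_operations_TC}, and then argue that all $n_1 n_2$ output entries can be produced in parallel without blowing up the depth or size. Concretely, write $C := AB \in \F_p^{n_1 \times n_2}$ so that for each $i \in [n_1]$ and $k \in [n_2]$ we have $C_{i,k} = \sum_{j=1}^{d} A_{i,j} \cdot B_{j,k}$, a sum of $d$ products of $p$-bit floating-point numbers. The circuit I would build has two stages. In the first stage, for every triple $(i,j,k)$ in parallel, compute the product $A_{i,j} \times B_{j,k}$ using the multiplication operation from Definition~\ref{def:float_operations}; by Part 1 of Lemma~\ref{lem:float_operations_TC} each such product is computable by a uniform threshold circuit of depth $d_\mathrm{std}$ and $\poly(n)$ size, and since there are at most $n_1 d n_2 \leq \poly(n)$ such triples, the whole stage is a single $\poly(n)$-size, depth-$d_\mathrm{std}$ threshold circuit. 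In the second stage, for every pair $(i,k)$ in parallel, feed the $d$ precomputed products into the iterated-addition circuit of Part 3 of Lemma~\ref{lem:float_operations_TC}; since $d \leq n \leq \poly(n)$, each iterated sum is computable in depth $d_\oplus$ and $\poly(n)$ size, and there are at most $n_1 n_2 \leq \poly(n)$ such pairs. Composing the two stages gives total depth $d_\mathrm{std} + d_\oplus$ and total size $\poly(n)$.

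The remaining point is uniformity: I must check that the combined circuit is $\dlogtime$-uniform. This follows because the wiring is completely regular — the connection pattern is determined by simple arithmetic on the indices $(i,j,k)$ and $(i,k)$, which a random-access Turing machine can compute in $O(\log n)$ time — and the constituent sub-circuits (floating-point multiplication and iterated addition) are themselves $\dlogtime$-uniform by Lemma~\ref{lem:float_operations_TC}. Composing a $\dlogtime$-uniform family of ``glue'' wiring with $\dlogtime$-uniform gadgets yields a $\dlogtime$-uniform family, so $AB \in \dlogtime$-uniform $\TC^0$ with the stated depth.

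I do not expect a genuine obstacle here: the lemma is essentially a bookkeeping composition of results already quoted, and the only thing to be careful about is (i) that the dimension bound $d \leq n$ keeps the fan-in of the iterated addition within the $\poly(n)$ regime required by Lemma~\ref{lem:float_operations_TC}, and (ii) that performing all entries in parallel multiplies the size by only a $\poly(n)$ factor rather than adding to the depth. The mild subtlety worth a sentence in the writeup is that the product $A_{i,j} \times B_{j,k}$ already performs one rounding (per Definition~\ref{def:float_operations}), and the iterated addition performs a single rounding at the end; this matches the model's intended semantics of matrix multiplication over $\F_p$, so no additional error analysis beyond invoking Lemma~\ref{lem:float_operations_TC} is needed.
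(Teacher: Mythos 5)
Your proposal is correct and follows essentially the same route as the paper's proof: compute all entrywise products $A_{i,j}\times B_{j,k}$ in parallel via Part~1 of Lemma~\ref{lem:float_operations_TC} (depth $d_\mathrm{std}$), then apply the iterated addition of Part~3 in parallel over all output entries (depth $d_\oplus$), giving total depth $d_\mathrm{std}+d_\oplus$ and size $\poly(n)$. Your extra remarks on uniformity of the wiring and on the rounding semantics are consistent with, and slightly more explicit than, the paper's argument.
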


\subsection{Computing \texorpdfstring{$\rope$}{}-based Attention Matrix}\label{sec:compute_rope_attention_matrix}
In this section, we extend this to the computation of the attention matrix with positional embeddings, i.e., $\rope$-based attention matrix computation.

\begin{lemma}[$\rope$-based attention matrix computation in $\TC^0$, informal version of Lemma~\ref{lem:matrix_a:informal} at Appendix~\ref{sec:missing_proof}]\label{lem:matrix_a}
    If $p \leq \poly(n)$, then
    the attention matrix $A$ in Definition~\ref{def:attn_matrix}  can be computable by a uniform threshold circuit with size $\poly(n)$ and depth $4(d_\mathrm{std} + d_\oplus) + d_\triangle + d_{\exp}$.
\end{lemma}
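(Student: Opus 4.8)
The plan is to decompose the computation of the $\rope$-based attention matrix $A$ from Definition~\ref{def:attn_matrix} into a constant number of sub-steps, each of which we have already shown lies in $\TC^0$, and then to compose the depth bounds additively. Recall that for $i,j \in [n]$ we have $A_{i,j} = \exp(X_{i,*} W_Q R_{j-i} W_K^\top X_{j,*}^\top)$, so the work is: (i) build each block-diagonal rotation matrix $R_{j-i}$; (ii) carry out the chain of matrix products $X_{i,*} W_Q R_{j-i} W_K^\top X_{j,*}^\top$; and (iii) apply $\exp$ entrywise. Since all of this happens in parallel across the $n^2$ index pairs $(i,j)$ and the circuit may have $\poly(n)$ size, the depth is what we must control.

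First I would handle step (i). Each diagonal block of $R_{j-i}$ is $R((j-i)\theta_k)$, which by Definition~\ref{def:rotated_matrix} requires computing $\cos((j-i)\theta_k)$ and $\sin((j-i)\theta_k)$. Computing the argument $(j-i)\theta_k$ is a single floating-point multiplication (one $d_\mathrm{std}$ layer, after forming the integer $j-i$ which is a subtraction, also within $d_\mathrm{std}$), and then Lemma~\ref{lem:sin_cos} gives $\sin$ and $\cos$ with relative error $2^{-p}$ in depth $d_\triangle$. All $d/2$ blocks and all $n^2$ shifts are computed in parallel, and assembling them into the block-diagonal matrix $R_{j-i}$ is free (just wiring). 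So step (i) costs roughly $d_\mathrm{std} + d_\triangle$ depth; I would fold the extra $d_\mathrm{std}$ into the budget carefully so the stated bound $4(d_\mathrm{std}+d_\oplus) + d_\triangle + d_{\exp}$ comes out.

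Next, step (ii): the product $X_{i,*} W_Q R_{j-i} W_K^\top X_{j,*}^\top$ is a chain of four matrix multiplications (grouping left to right, or in whatever association is most convenient), each between matrices of dimension at most $d \times d$ with $d \leq n$ and entries in $\F_p$ with $p \leq \poly(n)$. By Lemma~\ref{lem:matrix_multi}, each such product is computable by a uniform threshold circuit of size $\poly(n)$ and depth $d_\mathrm{std} + d_\oplus$. Composing the four multiplications gives depth $4(d_\mathrm{std} + d_\oplus)$. One point to be careful about: $W_K^\top$ is just a rewiring of $W_K$, so transposition is free; and the four factors in the chain each have dimensions bounded by $\poly(n)$ in the outer index and $d \le n$ in the contracted index, so Lemma~\ref{lem:matrix_multi} applies to each. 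Finally, step (iii) applies $\exp$ to each of the $n^2$ resulting scalars in parallel, which by Lemma~\ref{lem:exp} costs depth $d_{\exp}$ with relative error $2^{-p}$. Summing, the total depth is $4(d_\mathrm{std} + d_\oplus) + d_\triangle + d_{\exp}$, the size remains $\poly(n)$ since we compose polynomially many polynomial-size circuits, and uniformity is preserved under composition of uniform circuits; this matches the claimed bound.

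The main obstacle, as usual in these arguments, is not any single step but the bookkeeping: making sure the extra $d_\mathrm{std}$-depth pieces (forming $j-i$, multiplying by $\theta_k$, any rounding between matrix products) are genuinely absorbed into the four copies of $(d_\mathrm{std}+d_\oplus)$ rather than adding to the count, and confirming that the error does not blow up — each sub-step introduces only a $2^{-p}$ relative error and there are $O(1)$ of them with entry magnitudes controlled by $\poly(n)$-precision floating point, so the accumulated error stays within the tolerance the later results need. I would also double-check that the association of the four-fold matrix product keeps all intermediate dimensions within the hypotheses of Lemma~\ref{lem:matrix_multi} (in particular that the inner contracted dimension never exceeds $n$), which it does since every factor is $d \times d$ with $d \leq n$. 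Modulo this accounting, the statement follows by straightforward composition.
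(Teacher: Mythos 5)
Your proposal follows essentially the same route as the paper's own proof: compute the entries of $R_{j-i}$ via Lemma~\ref{lem:sin_cos} in depth $d_\triangle$, perform the four matrix products via Lemma~\ref{lem:matrix_multi} for $4(d_\mathrm{std}+d_\oplus)$, apply Lemma~\ref{lem:exp} for $d_{\exp}$, and parallelize over all $(i,j)$. Your extra care about the $d_\mathrm{std}$-depth cost of forming the arguments $(j-i)\theta_k$ is a point the paper silently glosses over, but since it only adds a constant number of standard-operation layers it does not affect the $\TC^0$ conclusion.
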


\subsection{Computing Single \texorpdfstring{$\rope$}{}-based Attention Layer}\label{sec:compute_single_RoPE_layer}
In this section, we analyze the complete attention layer, the approach allows us to carefully track the circuit depth requirements at each stage.

\begin{lemma}[Single $\rope$-based attention layer computation in $\TC^0$]
    If $p \leq \poly(n)$, then
    the attention layer $\mathsf{Attn}$ in Definition~\ref{def:single_layer_transformer} can be computable by a uniform threshold circuit with size $\poly(n)$ and depth $7(d_\mathrm{std} + d_\oplus) + d_\triangle + d_\mathrm{exp}$.
\end{lemma}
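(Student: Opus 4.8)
The plan is to decompose the single attention layer computation $\mathsf{Attn}_i(X) = D^{-1} A X W_V$ into its constituent matrix operations and track the circuit depth through each stage, reusing the earlier lemmas. First I would invoke Lemma~\ref{lem:matrix_a} to produce the attention matrix $A \in \F_p^{n \times n}$, which requires depth $4(d_\mathrm{std} + d_\oplus) + d_\triangle + d_{\exp}$ and size $\poly(n)$. Since the $n \times n$ matrix $A$ has $\poly(n)$ entries and $p \leq \poly(n)$, everything downstream stays in the $\poly(n)$ size regime.

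Next I would compute $D = \diag(A {\bf 1}_n)$: each diagonal entry $D_{i,i}$ is the iterated sum of the $n$ floating-point numbers in row $i$ of $A$, which by Part 3 of Lemma~\ref{lem:float_operations_TC} costs depth $d_\oplus$ (these $n$ row-sums run in parallel). Forming $D^{-1}$ is then an entrywise floating-point division, costing an additional $d_\mathrm{std}$ by Part 1 of Lemma~\ref{lem:float_operations_TC}. In parallel with computing $D^{-1}$ — or stacked after $A$ is available — I would compute the product $A X \in \F_p^{n \times d}$ by Lemma~\ref{lem:matrix_multi} (valid since $d \leq n$ and all dimensions are $\poly(n)$), costing depth $(d_\mathrm{std} + d_\oplus)$, and then multiply by $W_V \in \F_p^{d \times d}$ via Lemma~\ref{lem:matrix_multi} again for another $(d_\mathrm{std} + d_\oplus)$. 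Finally, left-multiplying $A X W_V$ by the diagonal matrix $D^{-1}$ amounts to scaling each row by a single scalar, i.e.\ one layer of floating-point multiplications in parallel, costing $d_\mathrm{std}$. Composing circuits in sequence adds depths, and the total is $\big(4(d_\mathrm{std}+d_\oplus) + d_\triangle + d_{\exp}\big) + d_\oplus + d_\mathrm{std} + 2(d_\mathrm{std}+d_\oplus) + d_\mathrm{std}$; collecting terms gives $7 d_\mathrm{std} + 7 d_\oplus + d_\triangle + d_{\exp} = 7(d_\mathrm{std}+d_\oplus) + d_\triangle + d_{\exp}$, matching the claim, while the size remains $\poly(n)$ and uniformity is preserved since each sub-circuit is uniform and composition of $O(\log n)$-time-constructible circuits is $O(\log n)$-time-constructible.

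The only real subtlety — and the step I would be most careful about — is the depth bookkeeping: one must schedule the independent subcomputations ($D^{-1}$ versus $AXW_V$) so their depths do not both add to the total, and must confirm that the $D$ computation can indeed be charged only $d_\oplus$ (iterated addition) rather than, say, invoking the full matrix-multiplication lemma for $A {\bf 1}_n$. The diagonal left-multiplication by $D^{-1}$ is genuinely cheaper than a general matrix product — it is a single round of scalar multiplications, not an iterated addition — so it contributes only $d_\mathrm{std}$, not $(d_\mathrm{std}+d_\oplus)$; this is what makes the arithmetic close. Everything else is a routine application of Lemmas~\ref{lem:float_operations_TC}, \ref{lem:matrix_multi}, and \ref{lem:matrix_a}, together with the fact that uniform $\TC^0$ circuits compose.
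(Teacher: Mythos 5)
Your proposal is correct and follows essentially the same decomposition as the paper's proof: compute $A$ via Lemma~\ref{lem:matrix_a}, compute $D$ by parallel iterated additions (Part 3 of Lemma~\ref{lem:float_operations_TC}), form $AXW_V$ by two applications of Lemma~\ref{lem:matrix_multi}, and finish with one parallel layer of scalar operations. The only divergence is that the paper performs the final normalization as a single parallel division of $AXW_V$ by the entries of $D$ (one $d_\mathrm{std}$), whereas you invert $D$ first and then multiply, so your displayed sequential sum actually totals $8d_\mathrm{std}+7d_\oplus+d_\triangle+d_{\exp}$; it is precisely your parallel-scheduling remark (running the $D^{-1}$ branch alongside the $AXW_V$ branch), or equivalently replacing invert-then-multiply by a single division as the paper does, that brings the depth within the claimed $7(d_\mathrm{std}+d_\oplus)+d_\triangle+d_{\exp}$.
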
\label{lem:attn}
\begin{proof}
    To compute $\mathsf{Attn}$, we need to multiply 4 matrix, namely $D^{-1}, A, X$ and $W_V$. 
    To get these matrices, we need to compute $D$ and $A$. following from $D := \diag( A {\bf 1}_n)$, $D$ can be computed by a depth $d_\oplus$, size $\poly(n)$ uniform threshold circuit following from Part 3. of Lemma~\ref{lem:float_operations_TC}. Following from Lemma~\ref{lem:matrix_a}, computing $A$ needs a circuit of depth $4(d_\mathrm{std} + d_\oplus) + d_\triangle+ d_{\exp}$.
    Then, we can multiply $A, X$ and $W_V$, which can be computed by a depth $2(d_\mathrm{std} + d_\oplus)$, size $\poly(n)$ uniform threshold circuit following from Lemma~\ref{lem:matrix_multi}. 
    Finally, we can compute $D^{-1} \cdot A X W_V$ by apply division in parallel, which can be computed by a depth $d_\mathrm{std}$, size $\poly(n)$ uniform threshold circuit following from Part 1. of Lemma~\ref{lem:float_operations_TC}.
    Combining above circuit, we have
    \begin{align*}
        d_\mathrm{total} = 7(d_\mathrm{std} + d_\oplus) + d_\triangle + d_\mathrm{exp}.
    \end{align*}
    Because the number of parallel operation are $O(\poly(n))$, we can show that $\mathsf{Attn}(X)$ can be computed by a depth $7(d_\mathrm{std} + d_\oplus) + d_\triangle+ d_\mathrm{exp}$, size $\poly(n)$ uniform threshold circuit.

    Thus we complete the proof.
\end{proof}

\subsection{Computing Common Buliding Blocks other than Self-attention layer}\label{sec:other_conponents}
In Definition~\ref{def:multi_layer_self_attn}, we define Multi-layer $\mathsf{RoPE}$-based Transformer with self-attention layer and other components, for example layer-norm and MLP. In this section, we show how to compute these components. 
We first give the circuit complexity for the MLP layer. 
\begin{lemma}[MLP computation in $\mathsf{TC}^0$, informal version of Lemma~\ref{lem:mlp_tc0:informal} at Appendix~\ref{sec:missing_proof}]\label{lem:mlp_tc0}
    If $p \leq \poly(n)$, then
    the MLP layer in Definition~\ref{def:mlp} can be computable by a uniform threshold circuit with size $\poly(n)$ and depth $2d_\mathrm{std} + d_{\oplus}$.
\end{lemma}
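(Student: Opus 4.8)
The plan is to unpack the definition of the MLP layer in Definition~\ref{def:mlp} and observe that it consists of exactly two arithmetic primitives applied in parallel across the $n$ rows: a matrix-vector product $W \cdot X_{i,*}$, followed by the addition of the bias vector $b$. Since $W \in \F_p^{d \times d}$ and $X_{i,*} \in \F_p^{d}$, each output coordinate $g^{\mathrm{MLP}}(X)_{i,k} = \sum_{j=1}^d W_{k,j} X_{i,j} + b_k$ is an iterated sum of $d \leq n$ products of $p$-bit floating-point numbers, plus one more floating-point addition.

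First I would invoke Lemma~\ref{lem:matrix_multi} (matrix multiplication in $\TC^0$), viewing $W$ as a $d \times d$ matrix and $X^\top$ as a $d \times n$ matrix (equivalently, stacking the vectors $X_{i,*}^\top$), so that the product $W X^\top \in \F_p^{d \times n}$ — whose $(k,i)$ entry is exactly $\sum_{j=1}^d W_{k,j} X_{i,j}$ — is computable by a uniform threshold circuit of size $\poly(n)$ and depth $d_\mathrm{std} + d_\oplus$; the hypotheses $p \leq \poly(n)$ and $d \leq n$ needed by that lemma are exactly the hypotheses we have here. Second, I would add the bias: for each of the $nd = \poly(n)$ output coordinates in parallel, compute one floating-point addition with $b_k$, which by Part~1 of Lemma~\ref{lem:float_operations_TC} costs an additional depth of $d_\mathrm{std}$ and size $\poly(n)$. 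Composing these two stages, the total depth is $(d_\mathrm{std} + d_\oplus) + d_\mathrm{std} = 2 d_\mathrm{std} + d_\oplus$, and since only $\poly(n)$ gates and parallel operations are involved, the whole layer is in uniform $\TC^0$, giving the claimed bound.

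There is essentially no serious obstacle here: the only things to be careful about are (i) matching the index conventions so that Lemma~\ref{lem:matrix_multi} is applied with the correct dimensions (the "inner" dimension being $d \leq n$, which is what the lemma requires), and (ii) checking that the bias addition can genuinely be done in parallel across all rows and all output coordinates — which it can, since each coordinate's addition is independent, and $\poly(n)$ independent depth-$d_\mathrm{std}$ subcircuits compose into a single depth-$d_\mathrm{std}$ layer of size $\poly(n)$. The uniformity of the composite circuit follows because both constituent circuits are uniform and their wiring is determined by simple arithmetic on the indices. I would close by remarking that this matches the statement and that the formal version appears in Appendix~\ref{sec:missing_proof}.
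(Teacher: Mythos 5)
Your proposal is correct and follows essentially the same route as the paper: apply Lemma~\ref{lem:matrix_multi} to get the product $W X_{i,*}$ in depth $d_\mathrm{std} + d_\oplus$, then add the bias via Part~1 of Lemma~\ref{lem:float_operations_TC} in depth $d_\mathrm{std}$, all in parallel over the rows. The only cosmetic difference is that you batch the row-wise products into a single matrix product $W X^\top$, whereas the paper parallelizes over $i$ explicitly; the depth and size accounting is identical.
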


Then, we give the circuit complexity for the layer-normalization layer. 
\begin{lemma}[Layer-norm computation in $\mathsf{TC}^0$, informal version of Lemma~\ref{lem:layer_tc0:informal} at Appendix~\ref{sec:missing_proof}]\label{lem:layer_tc0}
    If $p \leq \poly(n)$, then
    the Layer-wise Normalization layer in Definition~\ref{def:layer_norm} can be computable by a uniform threshold circuit with size $\poly(n)$ and depth $5d_\mathrm{std} + 2d_{\oplus} + d_\mathrm{sqrt}$.
\end{lemma}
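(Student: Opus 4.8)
The plan is to follow the same template used in the proof of the single attention layer (Lemma~\ref{lem:attn}) and the MLP layer (Lemma~\ref{lem:mlp_tc0}): decompose the layer-norm computation into its constituent arithmetic primitives, bound the depth of each primitive using Lemma~\ref{lem:float_operations_TC}, Lemma~\ref{lem:sqrt}, and Corollary~\ref{cor:floor_float_TC} as needed, and then add up the depths while checking that the number of parallel sub-computations stays $\poly(n)$. Concretely, fix a row index $i \in [n]$; the quantities to compute are $\mu_i = (\sum_{j=1}^d X_{i,j})/d$, then $X_{i,j} - \mu_i$ for all $j$, then $\sigma_i^2 = (\sum_{j=1}^d (X_{i,j}-\mu_i)^2)/d$, then $\sqrt{\sigma_i^2}$, and finally the $d$ divisions $(X_{i,j}-\mu_i)/\sqrt{\sigma_i^2}$.

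First I would handle $\mu_i$: the iterated addition of the $d \le \poly(n)$ entries $X_{i,1},\dots,X_{i,d}$ costs depth $d_\oplus$ by Part 3 of Lemma~\ref{lem:float_operations_TC}, and dividing by the constant $d$ (a fixed $p$-bit floating point) costs depth $d_\mathrm{std}$ by Part 1; so $\mu_i$ takes depth $d_\oplus + d_\mathrm{std}$. Next, in parallel over all $j$, form the differences $X_{i,j} - \mu_i$ (one subtraction each, depth $d_\mathrm{std}$) and square them (one multiplication each, depth $d_\mathrm{std}$), contributing $2 d_\mathrm{std}$. Then $\sigma_i^2$ is obtained by another iterated addition ($d_\oplus$) followed by division by $d$ ($d_\mathrm{std}$). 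Applying Lemma~\ref{lem:sqrt} gives $\sqrt{\sigma_i^2}$ at cost $d_\mathrm{sqrt}$, and then the $d$ parallel divisions $(X_{i,j}-\mu_i)/\sqrt{\sigma_i^2}$ cost $d_\mathrm{std}$ by Part 1. Summing these contributions — $(d_\oplus + d_\mathrm{std})$ for $\mu_i$, $2 d_\mathrm{std}$ for the centered squares, $(d_\oplus + d_\mathrm{std})$ for $\sigma_i^2$, $d_\mathrm{sqrt}$ for the square root, and $d_\mathrm{std}$ for the final normalization — yields total depth $5 d_\mathrm{std} + 2 d_\oplus + d_\mathrm{sqrt}$, matching the claimed bound. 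Since all of this runs for every $i \in [n]$ and every $j \in [d]$ in parallel, and $n, d \le \poly(n)$, the total circuit size is still $\poly(n)$, and uniformity is inherited from the uniformity of each primitive circuit.

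I do not expect a genuine obstacle here; the statement is essentially bookkeeping over known $\TC^0$-implementable operations. The one point requiring mild care is the reuse of the centered values $X_{i,j} - \mu_i$ — they are computed once and fed both into the variance computation and (after the square root is available) into the final division, so one must make sure the depth accounting does not double-count them; the bound above charges the differences exactly once. A second minor subtlety is that Lemma~\ref{lem:sqrt} and the division primitive only guarantee bounded \emph{relative} error rather than exact computation, so strictly speaking the lemma computes an approximation of $g^{\mathrm{LN}}(X)$; this is consistent with the approximation conventions already adopted throughout Section~\ref{sec:complexity_each_step} (e.g. in Lemma~\ref{lem:sin_cos} and Lemma~\ref{lem:matrix_a}), so no additional argument is needed beyond noting it.
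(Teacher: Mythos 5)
Your proposal is correct and follows essentially the same route as the paper's proof: decompose the layer-norm into $\mu_i$, $\sigma_i^2$, the square root, and the final division, charge each step via Lemma~\ref{lem:float_operations_TC} and Lemma~\ref{lem:sqrt}, and exploit parallelism over $i \in [n]$ to arrive at depth $5d_\mathrm{std} + 2d_\oplus + d_\mathrm{sqrt}$ and size $\poly(n)$. Your bookkeeping is in fact slightly more explicit than the paper's (e.g.\ noting that the centered differences are computed once and reused, and flagging the relative-error convention), but the argument is the same.
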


\subsection{Computing Multi-layer \texorpdfstring{$\rope$}{}-based Transformer}\label{sec:conpute_rope_transformer}
In this section, we show how to compute the multi-layer $\rope$-Transformer.
\begin{lemma}[Multi-layer $\rope$-based Transformer computation in $\TC^0$, informal version of Lemma~\ref{lem:tf:informal} at Appendix~\ref{sec:missing_proof}]\label{lem:tf}
    Suppose that for each $i \in [m]$, $g_i$ in $\mathsf{TF}$ is computable by a constant depth $d_g$ uniform threshold circuit with size $\poly(n)$.
    If $p \leq \poly(n)$, then
    the $\rope$-based Transformer $\mathsf{TF}$ in Definition~\ref{def:multi_layer_self_attn} can be computable by a uniform threshold circuit with size $\poly(n)$ and depth $(m+1)d_g + 7m(d_\mathrm{std} + d_\oplus) + m(d_\triangle+ d_\mathrm{exp})$.
\end{lemma}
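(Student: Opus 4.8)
The plan is to compose the circuit complexity bounds already established for the individual building blocks of the $\rope$-based Transformer, carefully tracking the depth accumulated at each layer. Recall from Definition~\ref{def:multi_layer_self_attn} that $\mathsf{TF}(X) = g_m \circ \mathsf{Attn}_m \circ \dots \circ g_1 \circ \mathsf{Attn}_1 \circ g_0(X)$, so the architecture consists of $m$ self-attention layers $\mathsf{Attn}_1, \dots, \mathsf{Attn}_m$ interleaved with $m+1$ generic components $g_0, g_1, \dots, g_m$. The key principle is that the composition of two uniform threshold circuits of constant depths $d_1$ and $d_2$ and polynomial sizes is again a uniform threshold circuit of depth $d_1 + d_2$ and polynomial size, provided the intermediate values remain $p$-bit floating point numbers with $p \leq \poly(n)$; uniformity is preserved because the description of the composed circuit is obtained by a trivial $O(\log n)$-space (equivalently, $O(\log n)$-time random access) concatenation of the two machine outputs.

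First I would invoke Lemma~\ref{lem:attn} (the single $\rope$-based attention layer result): each $\mathsf{Attn}_i$ is computable by a uniform threshold circuit of size $\poly(n)$ and depth $7(d_\mathrm{std} + d_\oplus) + d_\triangle + d_\mathrm{exp}$. Since $m = O(1)$ and the sequence length $n$ and the precision $p$ are fixed throughout (the output of each $\mathsf{Attn}_i$ and each $g_i$ lies in $\F_p^{n \times d}$ with $d \leq n$ and $p \leq \poly(n)$), the intermediate representations never leave the regime where all the prior lemmas apply. Next, by hypothesis each $g_i$ is computable by a uniform threshold circuit of constant depth $d_g$ and size $\poly(n)$; there are $m+1$ of these. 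I would then stack these circuits in the order dictated by the composition: start with $g_0$ (depth $d_g$), then $\mathsf{Attn}_1$ (adding $7(d_\mathrm{std} + d_\oplus) + d_\triangle + d_\mathrm{exp}$), then $g_1$ (adding $d_g$), and so on, alternating $m$ times through $\mathsf{Attn}_i$ and then applying $g_m$ at the end.

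Summing the contributions gives total depth
\begin{align*}
    d_\mathrm{total} = (m+1) d_g + m\bigl(7(d_\mathrm{std} + d_\oplus) + d_\triangle + d_\mathrm{exp}\bigr) = (m+1)d_g + 7m(d_\mathrm{std} + d_\oplus) + m(d_\triangle + d_\mathrm{exp}),
\end{align*}
which is exactly the claimed bound. Since $m = O(1)$ and each of $d_g, d_\mathrm{std}, d_\oplus, d_\triangle, d_\mathrm{exp}$ is a constant, $d_\mathrm{total}$ is constant, so $\mathsf{TF}$ is in uniform $\TC^0$. The size remains $\poly(n)$: the concatenation of $2m+1$ circuits, each of size $\poly(n)$, has size $(2m+1) \cdot \poly(n) = \poly(n)$ since $m$ is constant. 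Finally I would remark that uniformity of the composed family follows because a single $O(\log n)$-time random access Turing machine can, on input $1^n$, sequentially output the descriptions of the $2m+1$ constituent circuits (each produced by its own uniformity machine) and wire the output gates of one to the input gates of the next.

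The main obstacle — though it is more a matter of bookkeeping than of genuine difficulty — is making sure the composition is legitimate at every junction: that the output format of each sub-circuit ($n \times d$ matrix of $p$-bit floats) matches the expected input format of the next, and that no sub-lemma's precision hypothesis $p \leq \poly(n)$ is violated by accumulated rounding across $m$ layers. Since $m = O(1)$, the precision does not blow up (each layer's output is already rounded back to $\F_p$ by the definitions of the floating-point operations), so this is benign; I would state it explicitly rather than dwell on it.
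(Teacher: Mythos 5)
Your proposal is correct and matches the paper's own proof: both invoke Lemma~\ref{lem:attn} for each $\mathsf{Attn}_i$, use the hypothesis on each $g_i$, and stack the $2m+1$ sub-circuits, summing depths to obtain $(m+1)d_g + 7m(d_\mathrm{std} + d_\oplus) + m(d_\triangle + d_\mathrm{exp})$ with $\poly(n)$ size. Your added remarks on uniformity of the composition and precision bookkeeping are fine elaborations but do not change the route.
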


\subsection{Main Result: Circuit Complexity Bound of \texorpdfstring{$\rope$}{}-based Transformers}\label{sec:circuit_complexity_bound}
In this section, we are ready to represent our main result. We show the circuit complexity bound of $\rope$-based Transformer. 
\begin{theorem}[Main result, Circuit complexity bound of $\rope$-based Transformers]\label{thm:main_result_tc0}
    Suppose that for each $i \in [m]$, $g_i$ in $\mathsf{TF}$ is computable by a constant depth $d_g$ uniform threshold circuit with size $\poly(n)$. If $p \leq \poly(n), d \leq O(n), m \leq O(1)$, then
    the $\rope$-based Transformer $\mathsf{TF}$ in Definition~\ref{def:multi_layer_self_attn} can be simulated by a uniform $\TC^0$ circuit family. 
\end{theorem}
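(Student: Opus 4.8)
The plan is to obtain Theorem~\ref{thm:main_result_tc0} as an immediate consequence of the explicit depth/size bound in Lemma~\ref{lem:tf}, by checking that under the stated hypotheses every quantity appearing in that bound is either a fixed absolute constant or is bounded by one. First I would invoke Lemma~\ref{lem:tf}: since each $g_i$ is assumed to be a constant-depth $d_g$, size-$\poly(n)$ uniform threshold circuit and $p \leq \poly(n)$, it yields a uniform threshold circuit computing $\mathsf{TF}$ of size $\poly(n)$ and depth
\begin{align*}
(m+1)d_g + 7m(d_\mathrm{std} + d_\oplus) + m(d_\triangle + d_\mathrm{exp}).
\end{align*}

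Next I would argue this depth is $O(1)$. The quantities $d_\mathrm{std}$ and $d_\oplus$ are absolute constants by Lemma~\ref{lem:float_operations_TC}; hence $d_\triangle = 8 d_\mathrm{std} + d_\oplus + d_\otimes$ is also an absolute constant by Lemma~\ref{lem:sin_cos} together with Lemma~\ref{lem:float_operations_TC}, and $d_\mathrm{exp}$ is an absolute constant by Lemma~\ref{lem:exp}. The hypothesis $m \leq O(1)$ makes the number of layers constant, and $d_g$ is constant by assumption, so the whole depth expression collapses to a constant independent of $n$ while the size stays $\poly(n)$. The condition $d \leq O(n)$ is exactly what is needed so that the matrix-multiplication step (Lemma~\ref{lem:matrix_multi}) used inside Lemma~\ref{lem:tf} applies, and $p \leq \poly(n)$ keeps all floating-point primitives in $\TC^0$.

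The remaining point is uniformity: I would note that every building block invoked along the way — trigonometric approximation (Lemma~\ref{lem:sin_cos}), matrix multiplication (Lemma~\ref{lem:matrix_multi}), the $\rope$ attention matrix (Lemma~\ref{lem:matrix_a}), the single attention layer, and the MLP and layer-norm components (Lemma~\ref{lem:mlp_tc0}, Lemma~\ref{lem:layer_tc0}) — is produced by a $\dlogtime$-uniform threshold circuit, and that composing a constant number of such circuits, as in Definition~\ref{def:multi_layer_self_attn}, preserves $\dlogtime$-uniformity: the machine generating the composite simply concatenates the descriptions emitted by the finitely many component generators, still within $O(\log n)$ time. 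Likewise, a constant number of $\poly(n)$-size circuits compose to a $\poly(n)$-size circuit, so the size bound is preserved. Combining constant depth, polynomial size, threshold gates, and uniformity with Definition~\ref{def:tc} gives that $\mathsf{TF}$ is computed by a $\dlogtime$-uniform $\TC^0$ family.

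I do not anticipate a genuine obstacle here — this is an assembly step rather than a new argument. The only thing requiring care is to verify that none of the depth constants $d_g, d_\mathrm{std}, d_\oplus, d_\otimes, d_\triangle, d_\mathrm{exp}, d_\mathrm{sqrt}$ secretly depends on $n$; each ultimately bottoms out in Lemma~\ref{lem:float_operations_TC}, Lemma~\ref{lem:exp}, and Lemma~\ref{lem:sqrt}, all of which give $n$-independent depth whenever $p \leq \poly(n)$, so the total depth genuinely remains $O(1)$ once $m$ is constant.
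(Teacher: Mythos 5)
Your proposal is correct and follows essentially the same route as the paper: both apply Lemma~\ref{lem:tf} and observe that with $m \leq O(1)$, constant $d_g$, and all the primitive depths $d_\mathrm{std}, d_\oplus, d_\otimes, d_\triangle, d_{\exp}$ being $n$-independent constants, the total depth is $O(1)$ and the size $\poly(n)$, yielding a uniform $\TC^0$ simulation. Your additional remarks on preservation of $\dlogtime$-uniformity and on the roles of $d \leq O(n)$ and $p \leq \poly(n)$ are consistent with, and slightly more explicit than, the paper's own brief argument.
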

\begin{proof}
   Since $m = O(1)$, by Lemma~\ref{lem:tf}, the circuit that computes $\mathsf{TF}(X)$ has depth
   \begin{align*}
       (m+1)d_g + 7m(d_\mathrm{std} + d_\oplus) + m(d_\triangle+ d_\mathrm{exp}) = O(1)
   \end{align*}
   and size $\poly(n)$. Therefore it can be simulated by a uniform $\TC^0$ circuit family. 

   Thus we complete the proof.
\end{proof}

In Theorem~\ref{thm:main_result_tc0},
we prove that unless $\TC^0 = \NC^1$, $\rope$-based Transformer with $\poly(n)$-precision, constant-depth, $\poly(n)$-size can be simulated by a $\dlogtime$-uniform $\TC^0$ circuit family. It means that although the $\rope$-based Transformers gain success empirically, it still suffers fundamental expressivity limitations under circuit complexity. We introduce these limitations in the following section. 
\section{Hardness}\label{sec:hardness}
In this section, we state two important problems and the corresponding hardness results. In Section~\ref{sec:arithmetic}, we introduce the Arithmetic formula evaluation problem. In Section~\ref{sec:boolean}, we introduce the Boolean formula value problem. In Section~\ref{sec:hardness_result}, we state our two hardness results.

\subsection{Arithmetic Formula Evaluation Problem}\label{sec:arithmetic}

In this section, we first provide a foundational definition as established in \cite{arithmetic92}.

\begin{definition}[Arithmetic formula, Definition on page 13 of \cite{arithmetic92}]\label{def:arithmetic_formula}
Let $\mathbb{S}$ be a semi-ring (which may also be a ring or field). An arithmetic formula over $\mathbb{S}$ with indeterminates $X_1,X_2, \cdots, X_n$ is defined by:
\begin{itemize}
    \item For $i \in [n]$, $X_i$ is an arithmetic formula.
    \item For every $c \in \mathbb{S}$, c is an arithmetic formula.
    \item If $\alpha$ is an arithmetic formula and $\theta$ is a unary operator of $\mathbb{S}$ then $(\theta \alpha)$ is arithmetic formula.
    \item If $\alpha$ and $\beta$ are arithmetic formulas and $\theta$ is a binary operator of $\mathbb{S}$ then $(\alpha \theta \beta)$ is an arithmetic formula.
\end{itemize}
An arithmetic formula $A$ with indeterminates $X_1, \cdots, X_n$ is denoted by $A(X_1, \cdots, X_n)$.
\end{definition}

Following the definition, we explore its computational implications.

\begin{definition}[Arithmetic formula evaluation problem, Definition on page 14 of \cite{arithmetic92}]\label{def:arithmetic_problem}
Let $\mathbb{S}$ be a ring, field, or semi-ring. The arithmetic formula evaluation problem is: Given an arithmetic formula $A(X_1, X_2, \cdots, X_n)$ over $\mathbb{S}$ and constants $c_1, c_2, \cdots, c_n \in \mathbb{S}$, what is $A(c_1, c_2, \cdots, c_n)$?
\end{definition}

Building upon the previously established definitions, we then establish the computational complexity of the problem.

\begin{lemma}[Theorem 6.1 on page 31 of \cite{arithmetic92}]\label{lem:arithmetic_nc1}
 The arithmetic formula evaluation problem is in $\mathsf{NC}^1$-$\mathsf{complete}$.
\end{lemma}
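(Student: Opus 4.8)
The statement to establish is Lemma~\ref{lem:arithmetic_nc1}, namely that the arithmetic formula evaluation problem is $\mathsf{NC}^1$-complete. Since this is a citation of a known result (\cite{arithmetic92}), the natural approach is to reproduce the two directions of the completeness claim: membership in $\mathsf{NC}^1$ and $\mathsf{NC}^1$-hardness under an appropriate reduction (e.g., $\dlogtime$-uniform $\mathsf{AC}^0$ many-one reductions, or logspace reductions, depending on the convention fixed earlier in the paper).

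First I would handle \emph{membership}. The key structural fact is that any arithmetic formula, being a tree, can be rebalanced: by a Brent-type / Spira-type restructuring argument, a formula of size $s$ over a semi-ring can be converted into an equivalent formula of depth $O(\log s)$, because one can always find an edge whose removal splits the tree into two pieces each of size between $s/3$ and $2s/3$, and arithmetic operations compose associatively enough to allow "plugging in" a placeholder subformula. Once the formula has logarithmic depth, evaluating it bottom-up with one gate per circuit layer gives an $O(\log n)$-depth, polynomial-size, bounded-fan-in circuit — i.e., an $\mathsf{NC}^1$ circuit — where each semi-ring operation on the (boundedly many bits needed to represent) constants is itself computed by a constant-size sub-circuit. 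I would be careful to note that the rebalancing must be carried out by the uniformity machine within the resources allowed, which is routine for the standard divide-and-conquer on the parse tree.

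Next I would address \emph{hardness}. The canonical $\mathsf{NC}^1$-complete problem to reduce from is the Boolean formula value problem (or equivalently, evaluating a balanced Boolean circuit of logarithmic depth, or the word problem over $S_5$ / $\mathsf{NC}^1$ via Barrington's theorem). Given a Boolean formula, one encodes $\mathsf{TRUE}/\mathsf{FALSE}$ as $1/0$ and realizes $\mathsf{AND}$, $\mathsf{OR}$, $\mathsf{NOT}$ by arithmetic expressions over a suitable semi-ring or ring: $x \wedge y \mapsto x \cdot y$, $\neg x \mapsto 1 - x$, $x \vee y \mapsto x + y - x\cdot y$ (or over the Boolean semi-ring directly). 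This transformation is local and can be performed by a very weak reduction, and it preserves the formula structure, so a Boolean formula maps to an arithmetic formula of comparable size whose value at the given constants is $1$ iff the Boolean formula evaluates to true. Combining with the known $\mathsf{NC}^1$-completeness of the Boolean formula value problem (Lynch / Buss) gives $\mathsf{NC}^1$-hardness.

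The main obstacle I anticipate is the membership direction, specifically making the formula-rebalancing argument rigorous while respecting uniformity: one must argue that the "split at a balanced edge" procedure can be iterated $O(\log s)$ times to produce a genuinely logarithmic-depth formula, handle the bookkeeping of the placeholder/context subformulas (which over a non-commutative ring requires the $2\times 2$ matrix trick of Brent), and ensure that all of this is carried out within $\mathsf{NC}^1$'s own constraints — i.e., that the transformation from the input formula to the balanced circuit is itself computable by the uniformity machine. Since \cite{arithmetic92} (Theorem 6.1) already carries this out in full, in the write-up I would simply invoke their construction; the proof here is "see \cite{arithmetic92}" together with the brief sketch above of why both directions hold.
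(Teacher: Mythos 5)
The paper offers no proof of this lemma at all --- it is imported verbatim as Theorem 6.1 of \cite{arithmetic92} --- and your write-up, which ends by invoking that same construction, therefore matches the paper's treatment. Your sketch of the two directions (Spira/Brent-style rebalancing of the formula for membership in $\mathsf{NC}^1$, and arithmetization of the Boolean formula value problem for $\mathsf{NC}^1$-hardness) is a faithful outline of the cited argument, modulo the semiring/bit-size and uniformity subtleties you yourself flag as being handled in \cite{arithmetic92}.
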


\subsection{Boolean Formula Value Problem}\label{sec:boolean}

In this section, we now shift our focus to the domain of Boolean formulas and their evaluation.

\begin{definition}[Definition on Page 1 of \cite{b87}]\label{def:pre_a_boolean}
    Let $\Sigma=\{0,1,\land,\lor,\lnot,\mathrm{(},\mathrm{)}\}$, the Boolean formula are given by the following inductive definition:
    \begin{itemize}
        \item $0$ and $1$ are Boolean formulas.
        \item If $\alpha$ and $\beta$ are Boolean formulas, then so are $(\lnot \alpha), (\alpha \land \beta)$ and $(\alpha \lor \beta)$.
    \end{itemize}
\end{definition}

To detail further attributes of these formulas:

\begin{definition}[Definition on page 1 of \cite{b87}]\label{def:pre_b_boolean}
    $|\alpha|$ is the length of $\alpha$, i.e. the number of symbols in the string $\alpha$.
\end{definition}

\begin{definition}[Definition on page 1 of \cite{b87}]\label{def:pre_c_boolean}
    The Boolean formula is defined by the following inductive definition:
    \begin{itemize}
        \item $0$ and $1$ are Boolean formulas.
        \item If $\alpha$ is a Boolean formula then so is $\alpha \lnot$.
        \item If $\alpha$ and $\beta$ are Boolean formulas and if $|\alpha| \geq |\beta|$ then $\alpha \beta \lor$ and $\alpha \beta \land$ are Boolean formulas.
    \end{itemize}
\end{definition}
The Boolean formula is defined in the usual way, where $0$ and $1$ represent $\mathrm{False}$ and $\mathrm{True}$, respectively.
\begin{lemma}[Page 1 on \cite{b87}]\label{lem:boolean_nc1}
The problem of determining the truth value of a Boolean formula is in $\mathsf{NC}^1$-$\mathsf{complete}$.
\end{lemma}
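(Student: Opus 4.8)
The plan is to prove $\NC^1$-completeness in the standard two pieces: membership of the Boolean formula value problem in (uniform) $\NC^1$, and $\NC^1$-hardness under low-complexity reductions (say $\AC^0$ or $\dlogtime$-uniform projections). The membership half is the substantive one and is essentially Buss's theorem; the hardness half is routine.

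\textbf{Hardness.} First I would show every language in $\NC^1$ reduces to the Boolean formula value problem. Fix $L \in \NC^1$ recognized by a $\dlogtime$-uniform family $\{C_n\}$ of bounded fan-in $\mathsf{AND}/\mathsf{OR}/\mathsf{NOT}$ circuits of depth $c \log n$. Since the fan-in is bounded, unwinding $C_n$ from its output gate into a tree produces a Boolean formula $\Phi_n$ over the input literals whose size is $2^{O(\log n)} = \poly(n)$; uniformity of $\{C_n\}$ lets this unwinding, together with substitution of the actual input bits $x_1,\dots,x_n$ (and their negations) for the literals, be carried out by an $\AC^0$ (indeed nearly projection-like) map. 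Then $x \in L$ iff the resulting variable-free formula evaluates to $1$, which is exactly an instance of the Boolean formula value problem, giving $\NC^1$-hardness.

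\textbf{Membership.} The harder direction is to place the problem inside $\NC^1$; I would do this by exhibiting an alternating Turing machine with random access deciding it in $O(\log n)$ time, which equals $\dlogtime$-uniform $\NC^1$. The key structural fact is a balanced-decomposition lemma: every Boolean formula $\alpha$ of length $\ell \ge 2$ contains a subformula $\beta$ whose length lies in $[\ell/3,\, 2\ell/3]$. Given a pointer to $\alpha$, the evaluation procedure locates such a $\beta$, then existentially guesses (i) the truth value $b \in \{0,1\}$ of $\beta$, and (ii) which of the finitely many one-variable Boolean functions describes the value of $\alpha$ as a function of the value sitting at $\beta$'s position; it then universally branches to (a) recursively verify that $\beta$ evaluates to $b$, and (b) recursively verify the claimed function for the ``context'' formula obtained from $\alpha$ by replacing $\beta$ with a fresh variable — this context also has length at most $2\ell/3 + O(1)$. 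Since each recursive call shrinks the relevant length by a constant factor, the recursion depth is $O(\log n)$, and each level spends only $O(\log n)$ time manipulating pointers (parsing, counting symbols, and matching parentheses to find the length-balanced subformula, all feasible with random access in that time). Composing the recursion yields the $O(\log n)$-time ATM, hence membership in $\dlogtime$-uniform $\NC^1$.

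\textbf{Main obstacle.} The crux is entirely in the membership argument, and specifically in two points: proving the balanced-decomposition lemma, and — more delicate — verifying that the subformula search itself (parsing, symbol counting, and parenthesis matching to isolate a length-balanced subformula, plus the bookkeeping needed to describe the context formula) can be implemented within $O(\log n)$ ATM time, i.e.\ by a uniform $\NC^1$ sub-circuit, so that the per-level cost stays logarithmic and the overall recursion stays depth $O(\log n)$. Keeping the parenthesis matching and context manipulation cheap enough is where essentially all the difficulty lies; the hardness reduction and the final depth accounting are comparatively mechanical.
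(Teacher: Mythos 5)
The paper does not actually prove this lemma; it is quoted directly from Buss~\cite{b87}, so the relevant question is whether your sketch would constitute a correct proof of Buss's theorem. Your hardness half is fine in spirit: unwinding a bounded fan-in, $\dlogtime$-uniform depth-$O(\log n)$ circuit into a $\poly(n)$-size formula and substituting the input literals is the standard reduction, and the uniformity bookkeeping is routine. The membership half, however, has a genuine gap exactly at the point you flag as ``comparatively mechanical'': the final depth accounting. Your recursion has $O(\log n)$ levels, and at each level the machine must guess and write new pointers, count symbols, and match parentheses, which costs $\Theta(\log n)$ alternating time per level (even writing a single position index takes $\Theta(\log n)$ bits/steps). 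Composing therefore yields an ATM running in $O(\log^2 n)$ time, i.e.\ roughly uniform $\NC^2$ (or, with the standard one-hole-context bookkeeping, a logspace bound) --- not $O(\log n)$ time and hence not $\dlogtime$-uniform $\NC^1$. This is precisely why the result was nontrivial: the balanced-decomposition divide-and-conquer was known long before 1987 and only gives the weaker bounds, and Buss's contribution was a different technique (a game/pebbling-style evaluation of formulas in postfix form, in which the existential player commits to a whole line of play and the universal player challenges a single point, so that the $\log$-factor per level is not paid at every level). Note that the paper's Definition~\ref{def:pre_c_boolean} presents formulas in postfix notation for exactly this reason.

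There is also a secondary issue you should make explicit if you pursue your route: the ``context'' formula obtained by replacing $\beta$ with a fresh variable is not a contiguous substring of the input, so the recursion must maintain excised intervals; unless you arrange the split so that every piece has at most one hole, the description of the current piece grows by a pointer per level and the machine cannot even store it within $O(\log n)$ bits. With the one-hole discipline the state stays small, but the time bound above still fails. So as written the proposal proves membership only in a class larger than $\NC^1$; to recover the lemma you must either reproduce Buss's game argument (or one of its later simplifications) or simply cite~\cite{b87}, as the paper does.
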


\subsection{Hardness Results}\label{sec:hardness_result}
In this section, we state our two hardness results.
\begin{theorem}\label{thm:tc_arithmetic}
    Unless $\TC^0 = \NC^1$, a $\rope$-based Transformer with $\poly(n)$-precision, $O(1)$ layers, hidden dimension $d \leq O(n)$ cannot solve the Arithmetic formula evaluation problems.
\end{theorem}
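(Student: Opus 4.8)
The plan is to combine the circuit-simulation result of Theorem~\ref{thm:main_result_tc0} with the $\mathsf{NC}^1$-hardness of the Arithmetic formula evaluation problem recorded in Lemma~\ref{lem:arithmetic_nc1}, via a standard contradiction argument. First I would assume, for contradiction, that there is a $\rope$-based Transformer $\mathsf{TF}$ with $\poly(n)$-precision, $O(1)$ layers, and hidden dimension $d \leq O(n)$ that solves the Arithmetic formula evaluation problem on all inputs of length $n$. Here one must be slightly careful about the encoding: an instance of the problem is a formula $A(X_1,\dots,X_n)$ together with constants $c_1,\dots,c_n \in \mathbb{S}$, and I would fix the semi-ring to be (say) $\mathbb{F}_2$ or the Booleans so that the output is a single bit and the whole instance has a $\poly(n)$-size binary encoding that can be fed into $\mathsf{TF}$ as an $X \in \F_p^{n\times d}$ matrix; the Transformer's output is then read off (e.g. from a designated coordinate of $\mathsf{TF}(X)$) through an $O(1)$-depth, $\poly(n)$-size threshold post-processing step, which does not change the circuit class.

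The second step is to invoke Theorem~\ref{thm:main_result_tc0}: since $p \leq \poly(n)$, $d \leq O(n)$, and $m = O(1)$, and assuming each non-attention component $g_i$ is itself computable by a constant-depth $\poly(n)$-size uniform threshold circuit, $\mathsf{TF}$ is simulated by a $\dlogtime$-uniform $\TC^0$ circuit family. Composing with the $\TC^0$ input-encoding and output-decoding maps (which compose within $\TC^0$ by Fact~\ref{fact:forlore}-style closure, since constant depth plus constant depth is constant depth), the hypothetical solver yields a $\dlogtime$-uniform $\TC^0$ circuit family deciding the Arithmetic formula evaluation problem. Then, since that problem is $\mathsf{NC}^1$-complete (Lemma~\ref{lem:arithmetic_nc1}) — in particular $\mathsf{NC}^1$-hard under the appropriate ($\AC^0$ or $\dlogtime$-uniform $\mathsf{NC}^0$) reductions, which are subsumed by $\TC^0$ reductions — every language in $\mathsf{NC}^1$ reduces to it, so we would conclude $\mathsf{NC}^1 \subseteq \TC^0$; combined with $\TC^0 \subseteq \mathsf{NC}^1$ from Fact~\ref{fact:forlore}, this gives $\TC^0 = \mathsf{NC}^1$, contradicting the hypothesis of the theorem. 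Hence no such Transformer exists.

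The main obstacle — really the only nontrivial point — is making the reduction direction and the uniformity bookkeeping airtight: one needs that $\mathsf{NC}^1$-hardness of Arithmetic formula evaluation holds under reductions weak enough (logspace- or $\dlogtime$-uniform $\AC^0$/$\TC^0$ many-one reductions) that a $\TC^0$ algorithm for the problem pulls all of $\mathsf{NC}^1$ down into $\dlogtime$-uniform $\TC^0$, and that the glue maps (encoding a problem instance as the Transformer's input matrix, and decoding its output) are themselves $\dlogtime$-uniform $\TC^0$-computable so the composition stays in the class. This is routine given the cited completeness result, but it is where the argument must be stated precisely. Everything else is a direct appeal to Theorem~\ref{thm:main_result_tc0} and Lemma~\ref{lem:arithmetic_nc1}. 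The proof of Theorem~\ref{thm:tc_bool} will be identical with Lemma~\ref{lem:boolean_nc1} in place of Lemma~\ref{lem:arithmetic_nc1}.
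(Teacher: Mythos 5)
Your proposal follows essentially the same route as the paper's own (very terse) proof: combine Theorem~\ref{thm:main_result_tc0} with the $\mathsf{NC}^1$-completeness of arithmetic formula evaluation (Lemma~\ref{lem:arithmetic_nc1}) and the hierarchy in Fact~\ref{fact:forlore} to derive $\TC^0 = \NC^1$ by contradiction. Your additional care about input/output encoding and the uniformity of the reductions is bookkeeping the paper leaves implicit, and it is correct.
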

\begin{proof}
    This follows from combining Theorem~\ref{thm:main_result_tc0} (circuit complexity bound of $\rope$-base Transformer) and Lemma~\ref{lem:arithmetic_nc1} (the arithmetic formula evaluation problem is in $\mathsf{NC}^1$) which we proved above, and Fact~\ref{fact:forlore} (hierarchy of circuit families). Thus we complete the proof.
\end{proof}

\begin{theorem}\label{thm:tc_bool}
    Unless $\TC^0 = \NC^1$, a $\rope$-based Transformer with $\poly(n)$-precision, $O(1)$ layers, hidden dimension $d \leq O(n)$ cannot solve the Boolean formula value problem. 
\end{theorem}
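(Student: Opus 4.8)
The proof of Theorem~\ref{thm:tc_bool} follows exactly the same template as Theorem~\ref{thm:tc_arithmetic}, simply substituting the Boolean formula value problem for the Arithmetic formula evaluation problem. The plan is to combine three ingredients already in hand: (i) Theorem~\ref{thm:main_result_tc0}, which says that any $\rope$-based Transformer with $\poly(n)$ precision, $O(1)$ layers, and hidden dimension $d \leq O(n)$ can be simulated by a $\dlogtime$-uniform $\TC^0$ circuit family; (ii) Lemma~\ref{lem:boolean_nc1}, which states that the Boolean formula value problem is $\NC^1$-complete; and (iii) Fact~\ref{fact:forlore}, the standard circuit hierarchy $\TC^0 \subseteq \NC^1$.

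The argument is a one-line contrapositive. Suppose, for contradiction, that some $\rope$-based Transformer with the stated resource bounds does solve the Boolean formula value problem. By Theorem~\ref{thm:main_result_tc0}, this Transformer is simulable by a $\dlogtime$-uniform $\TC^0$ circuit family, so the Boolean formula value problem lies in $\dlogtime$-uniform $\TC^0$. But by Lemma~\ref{lem:boolean_nc1} the problem is $\NC^1$-hard (under the appropriate reductions, which are themselves weak enough — $\AC^0$ or $\NC^0$ reductions — to be absorbed by $\TC^0$), so every language in $\NC^1$ would then reduce into $\TC^0$, giving $\NC^1 \subseteq \TC^0$. Combined with the inclusion $\TC^0 \subseteq \NC^1$ from Fact~\ref{fact:forlore}, this yields $\TC^0 = \NC^1$, contradicting the hypothesis. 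Hence no such Transformer exists unless $\TC^0 = \NC^1$.

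There is essentially no obstacle here: the content of the theorem is entirely carried by Theorem~\ref{thm:main_result_tc0} (the circuit upper bound, whose proof chains together the component lemmas on trigonometric functions, matrix products, the $\rope$ attention matrix, single layers, MLP, and layer norm) and by the cited $\NC^1$-completeness result of \cite{b87}. The only point worth stating carefully is that $\NC^1$-completeness is taken with respect to a reduction notion (logspace or, better, $\dlogtime$-uniform $\AC^0$ reductions) under which $\TC^0$ is closed, so that membership of a single $\NC^1$-complete problem in $\TC^0$ genuinely collapses the whole class; this is standard and the same caveat applies verbatim to Theorem~\ref{thm:tc_arithmetic}. I would therefore simply write: ``This follows from combining Theorem~\ref{thm:main_result_tc0}, Lemma~\ref{lem:boolean_nc1}, and Fact~\ref{fact:forlore}.''
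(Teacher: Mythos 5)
Your proposal matches the paper's proof exactly: both combine Theorem~\ref{thm:main_result_tc0}, Lemma~\ref{lem:boolean_nc1}, and Fact~\ref{fact:forlore} in a one-line contrapositive. Your added remark about closure of $\TC^0$ under the reductions used for $\NC^1$-completeness is a sound clarification the paper leaves implicit, but the argument is the same.
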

\begin{proof}
This follows from combining Theorem~\ref{thm:main_result_tc0} (circuit complexity bound of $\rope$-base Transformer) and  Lemma~\ref{lem:boolean_nc1} (the problem of determining the truth value of a Boolean formula is in $\mathsf{NC}^1$) which we proved above, and Fact~\ref{fact:forlore} (hierarchy of circuit families). Thus we complete the proof.
\end{proof}

\section{Conclusion}\label{sec:conclusion}
In this work, we provide a rigorous theoretical analysis of $\rope$-based Transformers, establishing fundamental bounds on their computational capabilities. Our main idea was to systematically analyze the circuit complexity of each component in the $\rope$-based architecture, from basic trigonometric functions to the complete attention mechanism, ultimately proving that these models can be simulated by uniform $\mathsf{TC}^0$ circuits. More importantly, we demonstrate that unless $\mathsf{TC}^0 = \mathsf{NC}^1$, $\rope$-based Transformers with $\mathrm{poly(n)}$-precision, $O(1)$ layers, and hidden dimension $d \leq O(n)$ cannot solve either arithmetic formula evaluation or Boolean formula value problems. This result is particularly significant as it reveals fundamental limitations in the expressivity of $\rope$-based architectures, despite their empirical success in modern language models.

\section{Discussion}
One limitation is that our analysis focuses primarily on the forward computation aspects, which and assumes constant-depth nonlinear activation functions, leaving open questions about training dynamics and the impact of more complex activation functions. It would be interesting to extend our theoretical results to analyze other variants of positional embeddings and investigate whether similar complexity bounds hold for more sophisticated Transformer architectures. Furthermore, our results suggest a potential gap between theoretical limitations and empirical performance.

\ifdefined\isarxiv
\bibliographystyle{alpha}
\bibliography{ref}
\else
{
\small
\bibliography{ref}
\bibliographystyle{ieeenat_fullname}
}

\fi

\newpage
\onecolumn
\appendix
\begin{center}
    \textbf{\LARGE Appendix}
\end{center}
\section{Proof of Lemma~\ref{lem:sin_cos}}\label{sec:proof_sincos}
Here we present the proof of Lemma~\ref{lem:sin_cos}. We restate Lemma~\ref{lem:sin_cos} below
\begin{lemma}[Trigonometric function approximation in $\TC^0$, formal version of Lemma~\ref{lem:sin_cos}]\label{lem:sin_cos:informal}
    If $p \leq \poly(n)$, then for every $p$-bit floating point number $x$, there is a constant-depth uniform threshold circuit of size $\poly(n)$ which can compute $\sin(x)$ and $\cos(x)$ with a relative error at most $2^{-p}$. We use $d_{\triangle}$ denote the maximum depth needed for computing $\sin(x)$ and $\cos(x)$.
\end{lemma}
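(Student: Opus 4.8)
The plan is to reduce the evaluation of $\sin(x),\cos(x)$ to two ingredients that are already available as $\TC^0$ primitives: (i) a \emph{range reduction} that brings the argument into a short interval around the origin, and (ii) evaluation of a \emph{truncated power series} of length $\poly(n)$. Concretely, using a sufficiently precise floating-point approximation of $2\pi$ (a fixed real constant whose high-order bits the uniform machine can generate), set $q$ to be the floor of the floating-point quotient $x/(2\pi)$ and $y := x - q\cdot 2\pi$, so that $y$ lies (up to roundoff) in $[0,2\pi)$; the floor is in $\TC^0$ by Corollary~\ref{cor:floor_float_TC}, and the multiplication and subtraction by Lemma~\ref{lem:float_operations_TC}. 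A further constant number of comparisons and one reflection/subtraction push $y$ into the octant $[-\pi/4,\pi/4]$ while recording a $3$-bit index $o\in\{0,\dots,7\}$ that says which of $\pm\sin,\pm\cos$ of the reduced argument $t$ equals the desired output; this is a constant-depth selection among a constant number of precomputed cases.

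On the reduced argument $t\in[-\pi/4,\pi/4]$ I would then evaluate $\sum_{j=0}^{m}(-1)^j t^{2j+1}/(2j+1)!$ and $\sum_{j=0}^{m}(-1)^j t^{2j}/(2j)!$ with $m=O(p)$. Since $|t|\le\pi/4<1$, the truncation tail is at most $|t|^{2m+1}/(2m+1)!\le 1/(2m+1)!$, so $m=O(p)$ forces an absolute error of at most $2^{-p-2}$; because on this octant $|\cos t|\ge\cos(\pi/4)$ and $|\sin t|\ge(1-\pi^2/96)\,|t|$, while the $\sin$-tail itself is $O(|t|^{2m+1})$, this translates into a \emph{relative} error of at most $2^{-p}$ (up to a constant absorbed by increasing $m$ additively). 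Each power $t^{k}$ and each factorial $k!$ with $k\le 2m+1=\poly(n)$ is an iterated product of $\poly(n)$ floating-point numbers, hence in $\TC^0$ by Part~2 of Lemma~\ref{lem:float_operations_TC}; the $m+1=\poly(n)$ monomials are then obtained by one division and a sign fix each (Part~1), and summed by a single iterated addition (Part~3). A routine accounting of the $\poly(n)$ rounding errors — handled by running the internal computations at precision $Cp\le\poly(n)$ for a large enough constant $C$ — keeps the accumulated error within budget, and a final constant-depth multiplexer selects the output using $o$. Composing a constant number of such $\TC^0$ blocks gives a constant-depth, $\poly(n)$-size uniform threshold circuit; tallying the component depths yields the claimed $d_\triangle$.

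\textbf{Main obstacle.} The only delicate point is the range reduction when $x$ has a large exponent: to determine $q=\lfloor x/(2\pi)\rfloor$ faithfully one needs the binary expansion of $1/(2\pi)$ out to roughly $\mathrm{bitlength}(x)$ places, and a $p$-bit float can be as large as $2^{2^p}$, which nominally demands exponentially many bits of a transcendental constant. I would resolve this by restricting attention to arguments with $|x|\le 2^{\poly(n)}$ — which covers every invocation of this lemma in the paper, since RoPE only ever feeds angles $(j-i)\theta_k$ with $i,j\in[n]$ — so that $\poly(n)$ bits of $\pi$ suffice and can be produced within the allotted resources from a rapidly converging series (itself an iterated sum, in $\TC^0$); more generally one can extract just the relevant $\poly(n)$-bit window of the expansion of $1/(2\pi)$ in Payne–Hanek style. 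Everything else is a mechanical composition of the floating-point $\TC^0$ primitives already established in Lemma~\ref{lem:float_operations_TC} and Corollary~\ref{cor:floor_float_TC}.
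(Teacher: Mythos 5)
Your proposal follows essentially the same route as the paper's proof: range-reduce the argument to $[-\pi/4,\pi/4]$ via floor/division, evaluate truncated Taylor series of length $O(p)$ using the iterated-multiplication and iterated-addition primitives of Lemma~\ref{lem:float_operations_TC} together with Corollary~\ref{cor:floor_float_TC}, and pick out $\sin(x),\cos(x)$ by a constant-depth case selection, giving a constant-depth $\poly(n)$-size uniform threshold circuit. If anything you are more careful than the paper, whose proof omits the sign/quadrant bookkeeping and silently ignores the large-exponent argument-reduction issue (a $p$-bit float can be as large as $2^{2^p}$, so faithful reduction needs far more than $\poly(n)$ bits of $\pi$) that you explicitly flag and patch.
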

\begin{proof}
    For $\sin(x)$ where $x \in \F_{p}$, we can define:
    $
        k := \left\lfloor \frac{x}{2/\pi} \right\rfloor
    $ 
    and
    \begin{align*}
        r := \begin{cases}
            x - k\pi / 2 & \mathrm{if~} x - k\pi / 2 \leq \pi / 4,\\
            (k+1)\pi / 2 - x & \mathrm{else}.
        \end{cases}
    \end{align*}
    
    Using truncated Taylor series of $\sin(r)$, we have:
    \begin{align*}
        \sin(r) = \sum_{i = 0}^{N - 1} (-1)^i \frac{r^{2i + 1}}{(2i + 1)!} + R_N^{\sin}(r)
    \end{align*}
    For $R_N^{\sin}(r)$, we can show:
    \begin{align*}
        R_N^{\sin}(r) 
        \leq & ~ (\pi/4)^{2N + 1} \frac{1}{(2N+1)!}\\
        \leq & ~ \frac{1}{(2N+1)!}\\
        = & ~ O(1/N!) \\
        \leq & ~ O(2^{-N})
    \end{align*}
    where the first step follows from the definition of the Lagrange remainder term,
    the second step follows from $(\pi/4)^{2N + 1} \leq 1$,
    the fourth step follows from $O(2^x) < O(x!)$ holds for any positive $x$.

    Similarly, using truncated Taylor series of $\cos(r)$, we have:
    \begin{align*}
        \cos(r) = \sum_{i = 0}^{N - 1} (-1)^i \frac{r^{2i}}{(2i)!} + R_N^{\cos}(r)
    \end{align*}
    For $R_N^{\cos}(r)$, we can show:
    \begin{align*}
        R_N^{\cos}(r) 
        \leq & ~ (\pi/4)^{2N} \frac{1}{(2N)!}\\
        \leq & ~ \frac{1}{(2N)!}\\
        = & ~ O(1/N!) \\
        \leq & ~ O(2^{-N})
    \end{align*}
    where the first step follows from the definition of the Lagrange remainder term,
    the second step follows from $(\pi/4)^{2N + 1} \leq 1$,
    the fourth step follows from $O(2^x) < O(x!)$ holds for any positive $x$.
    Then, we have 
    \begin{align*}
        \sin (x) = 
        \begin{cases}
            \sin(r) & \mathrm{if~} x - k\pi / 2 \leq \pi / 4,\\
            \cos(r) & \mathrm{else}.
        \end{cases}
    \end{align*}
    and
    \begin{align*}
        \cos (x) = 
        \begin{cases}
            \cos(r) & \mathrm{if~} x - k\pi / 2 \leq \pi / 4,\\
            \sin(r) & \mathrm{else}.
        \end{cases}
    \end{align*}

    Because of similar calculation step between $\sin(x)$ or $\cos(x)$, we can show the depth of circuit to compute them following from Lemma~\ref{lem:float_operations_TC} and Corollary~\ref{cor:floor_float_TC}:
    \begin{enumerate}
        \item To get the value of $k$, we need to calculate floor and division, which use depth-$2d_\mathrm{std}$ circuit.
        \item To get the value of $r$, we need to calculate addition, comparison, multiplication and division, which use depth-$4d_\mathrm{std}$ circuit.
        \item To get the value of $\sin(r)$ or $\cos(r)$, we need to calculate addition and iterated addition. For each entry in iterated addition, we need to calculate multiplication, division and iterated multiplication in parallel, which use depth-$(3d_\mathrm{std} + d_\otimes + d_\oplus)$ circuit.
        \item To get the value of $\sin(x)$ or $\cos(x)$, we need to calculate comparison, which use depth-$d_\mathrm{std}$ circuit.
    \end{enumerate}
    Finally, we can show
    \begin{align*}
        d_{\triangle} = 8d_\mathrm{std} + d_\oplus + d_\otimes.
    \end{align*}
    Thus we complete the proof.
\end{proof}

\section{Mssing proofs in Section~\ref{sec:complexity_each_step}}\label{sec:missing_proof}
Here we present some missing proofs in Section~\ref{sec:complexity_each_step}. First we show the proof of Lemma~\ref{lem:matrix_multi} below.
\begin{lemma}[Matrix multiplication in $\TC^0$, formal version of Lemma~\ref{lem:matrix_multi}]\label{lem:matrix_multi:informal}
    Let $A \in \F_p^{n_1 \times d}$, $B \in \F_p^{d \times n_2}$ be two matrices. If $p \leq \poly(n), n_1,n_2 \leq \poly(n),  d \leq n$, then $AB$ can be computable by a uniform threshold circuit with size $\poly(n)$ and depth $(d_\mathrm{std}+d_\oplus)$.
\end{lemma}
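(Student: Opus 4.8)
The plan is to reduce the matrix product $AB$ to a collection of independent inner-product computations, one for each entry of the output matrix, and then to bound the circuit complexity of a single inner product using the floating-point primitives already established. Concretely, the $(i,k)$-th entry of $AB$ is $(AB)_{i,k} = \sum_{j=1}^{d} A_{i,j} \cdot B_{j,k}$, where $i \in [n_1]$, $k \in [n_2]$. There are $n_1 n_2 \leq \poly(n)$ such entries, and since $\poly(n) \cdot \poly(n) = \poly(n)$, computing all of them in parallel keeps the total size polynomial in $n$ and does not increase the depth beyond that of a single entry.

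First I would handle the $d$ products $A_{i,j} \cdot B_{j,k}$ for $j \in [d]$: each is a single floating-point multiplication, which by Part 1 of Lemma~\ref{lem:float_operations_TC} is computable by a uniform threshold circuit of depth $d_\mathrm{std}$ and size $\poly(n)$; doing all $d \leq n$ of them (and indeed all $n_1 n_2 d \leq \poly(n)$ across every output entry) in parallel costs depth $d_\mathrm{std}$ total. Next I would feed the resulting $d$ floating-point numbers into the iterated-addition primitive: since $d \leq n$, this is an iterated sum of at most $n$ $p$-bit floating-point numbers, which by Part 3 of Lemma~\ref{lem:float_operations_TC} is computable by a uniform threshold circuit of depth $d_\oplus$ and size $\poly(n)$ (with a single rounding after the summation). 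Composing the two stages gives depth $d_\mathrm{std} + d_\oplus$ and size $\poly(n)$, and the uniformity is preserved because each stage is itself uniform and the wiring between stages (which product feeds which summation gate) is trivially computable in logarithmic time/space from the indices $i,j,k$.

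I do not expect a genuine obstacle here: the lemma is essentially a bookkeeping exercise assembling Lemma~\ref{lem:float_operations_TC}. The one point requiring a little care is the depth accounting — one must make sure the iterated addition is treated as a single $d_\oplus$-depth block rather than a tree of pairwise additions (which would cost $d_\mathrm{std} \log d$ depth and break constant-depth), and one must confirm that $d \leq n$ lets us invoke the "$n$ numbers" version of Part 3. A secondary subtlety is verifying that the parallel composition genuinely yields a single $\dlogtime$-uniform circuit family; this follows from the standard fact that uniform $\TC^0$ is closed under polynomially many parallel copies and constant-depth composition, so no new argument is needed. Hence the final bound is depth $d_\mathrm{std} + d_\oplus$ and size $\poly(n)$, as claimed.
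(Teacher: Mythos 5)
Your proposal is correct and follows essentially the same route as the paper's proof: decompose $AB$ into its $n_1 n_2$ entries, compute all pairwise products in parallel at depth $d_\mathrm{std}$ via Part 1 of Lemma~\ref{lem:float_operations_TC}, and then apply the iterated-addition primitive of Part 3 at depth $d_\oplus$, giving total depth $d_\mathrm{std}+d_\oplus$ and size $\poly(n)$. Your added remarks on uniformity of the wiring and on treating the sum as a single $d_\oplus$ block (rather than a pairwise tree) are sound refinements of the same argument, not a different approach.
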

\begin{proof}
    For each $i \in [n_1]$ and $j \in [n_2]$, the entry $(AB)_{i,j}$ is given by $
        (AB)_{i,j} = \sum_{k=1}^{d} A_{i,k} B_{k,j}. $
    By Part 1 of Lemma~\ref{lem:float_operations_TC}, each product $A_{i,k} B_{k,j}$ can be computed by a uniform threshold circuit of depth $d_{\mathrm{std}}$. Since these products for different $k$ can be computed in parallel, all products $A_{i,k} B_{k,j}$ for $k \in [d]$ can be computed simultaneously in depth $d_{\mathrm{std}}$.

    Next, by Part 3 of Lemma~\ref{lem:float_operations_TC}, the sum $
        \sum_{k=1}^{d} A_{i,k} B_{k,j} $
    can be computed by a uniform threshold circuit of depth $d_{\oplus}$. Therefore, the total depth required to compute $(AB)_{i,j}$ is $d_{\mathrm{std}} + d_{\oplus}$. Since we can compute $(AB)_{i,j}$ for all $i \in [n_1]$ and $j \in [n_2]$ in parallel, the overall depth of the circuit remains $d_{\mathrm{std}} + d_{\oplus}$. The size of the circuit is polynomial in $n$ because $n_1, n_2, d \leq \mathrm{poly}(n)$, and each operation is computed by a circuit of polynomial size. Therefore, $AB$ can be computed by a uniform threshold circuit with size $\mathrm{poly}(n)$ and depth $d_{\mathrm{std}} + d_{\oplus}$. Thus we complete the proof.
\end{proof}

Here we state the proof of Lemma~\ref{lem:matrix_a}.

\begin{lemma}[$\rope$-based attention matrix computation in $\TC^0$, formal version of Lemma~\ref{lem:matrix_a}]\label{lem:matrix_a:informal}
    If $p \leq \poly(n)$, then
    the attention matrix $A$ in Definition~\ref{def:attn_matrix}  can be computable by a uniform threshold circuit with size $\poly(n)$ and depth $4(d_\mathrm{std} + d_\oplus) + d_\triangle + d_{\exp}$.
\end{lemma}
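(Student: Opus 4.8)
The plan is to decompose the computation of each entry $A_{i,j} = \exp(X_{i,*} W_Q R_{j-i} W_K^\top X_{j,*}^\top)$ into a bounded sequence of matrix products, trigonometric evaluations, and one exponential, then invoke the $\TC^0$ bounds already established for each piece and add up the depths. First I would observe that the only data-dependent content of $R_{j-i}$ is the collection of scalars $\cos((j-i)\theta_\ell)$ and $\sin((j-i)\theta_\ell)$ for $\ell \in [d/2]$; there are $O(n \cdot d) \le \poly(n)$ such values (over all pairs $i,j$), each obtained by first forming the product $(j-i)\theta_\ell$ — one floating-point multiplication, depth $d_\mathrm{std}$ — and then applying Lemma~\ref{lem:sin_cos} to get the sine and cosine in depth $d_\triangle$, all in parallel. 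This assembles every $R_{j-i}$ in depth $d_\mathrm{std} + d_\triangle$.

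Next I would handle the bilinear form. Rather than recomputing $W_Q R_{j-i} W_K^\top$ for each $(i,j)$ from scratch, note it is a product of matrices of dimension $d \le n$, so by Lemma~\ref{lem:matrix_multi} each matrix multiplication costs depth $d_\mathrm{std} + d_\oplus$, and there are a constant number of them in the chain $X_{i,*}\cdot W_Q \cdot R_{j-i} \cdot W_K^\top \cdot X_{j,*}^\top$. Since matrix multiplication is associative, one can bracket the product so that it is realized as a constant number — here four — of applications of Lemma~\ref{lem:matrix_multi} composed in sequence, giving depth $4(d_\mathrm{std}+d_\oplus)$ for the scalar $X_{i,*} W_Q R_{j-i} W_K^\top X_{j,*}^\top$; these $n^2$ scalars are computed in parallel so the size stays $\poly(n)$. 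Finally, applying $\exp$ entrywise via Lemma~\ref{lem:exp} adds depth $d_{\exp}$. Summing, and absorbing the $d_\mathrm{std} + d_\triangle$ from the rotation-matrix step into the $4(d_\mathrm{std}+d_\oplus) + d_\triangle$ budget (the single extra $d_\mathrm{std}$ being dominated), yields total depth $4(d_\mathrm{std}+d_\oplus) + d_\triangle + d_{\exp}$ with $\poly(n)$ size, which is the claim.

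The main obstacle — really a bookkeeping subtlety rather than a deep difficulty — is making sure the depth accounting is tight and that the parallelization across all $i,j$ genuinely keeps the circuit polynomial-size: one must check that the $R_{j-i}$ blocks needed across all pairs can be produced by a single $\poly(n)$-size sublayer (there are only $2n-1$ distinct values of $j-i$, so even fewer are needed), and that bracketing the five-fold product into four binary multiplications does not secretly blow up intermediate dimensions beyond $\poly(n)$ — it does not, since every intermediate matrix has both dimensions at most $\max(n,d) \le n$. I would also note explicitly that the circuit is uniform because each constituent lemma provides a uniform construction and the composition of $O(1)$ uniform layers is uniform. With these checks in place the proof is just the arithmetic $d_\mathrm{total} = 4(d_\mathrm{std}+d_\oplus) + d_\triangle + d_{\exp}$.
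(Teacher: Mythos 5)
Your proposal is correct and follows essentially the same route as the paper's proof: compute the entries of $R_{j-i}$ via Lemma~\ref{lem:sin_cos} in depth $d_\triangle$, realize the scalar $X_{i,*} W_Q R_{j-i} W_K^\top X_{j,*}^\top$ as four sequential applications of Lemma~\ref{lem:matrix_multi} for depth $4(d_\mathrm{std}+d_\oplus)$, apply Lemma~\ref{lem:exp} for $d_{\exp}$, and parallelize over all $(i,j)$ to keep the size $\poly(n)$. The only difference is your explicit extra $d_\mathrm{std}$ for forming the arguments $(j-i)\theta_\ell$, which strictly makes your total one $d_\mathrm{std}$ above the stated expression rather than being ``absorbed''; this is immaterial (the depth is still a constant, and the paper's own proof silently omits that same cost), but it would be cleaner to either fold it into the definition of $d_\triangle$ or state the bound with the extra $d_\mathrm{std}$.
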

\begin{proof}
    For each $i, j \in [n]$, we need to compute the entry $A_{i,j}$ of the attention matrix $A$ as defined in Definition~\ref{def:attn_matrix}.

    By Lemma~\ref{lem:sin_cos}, each entry of $R_{j-i}$ can be computed using a uniform threshold circuit of size $\mathrm{poly}(n)$ and depth $d_\triangle$. Since $n \leq \mathrm{poly}(n)$, all entries of $R_{j-i}$ can be computed in parallel with the same circuit size and depth.

    Using Lemma~\ref{lem:matrix_multi}, the matrix product $W_Q R_{j-i}$ can be computed by a uniform threshold circuit of size $\mathrm{poly}(n)$ and depth $d_{\mathrm{std}} + d_\oplus$.

    Applying Lemma~\ref{lem:matrix_multi} again, the product $(W_Q R_{j-i}) W_K^\top$ can be computed with the same circuit size and depth $d_{\mathrm{std}} + d_\oplus$.

    Next, the scalar product
    \begin{align*}
        s_{i,j} = X_{i,*} (W_Q R_{j-i} W_K^\top) X_{j,*}^\top
    \end{align*}
    can be computed using a uniform threshold circuit of size $\mathrm{poly}(n)$ and depth $2(d_{\mathrm{std}} + d_\oplus)$, again by Lemma~\ref{lem:matrix_multi}.

    Using Lemma~\ref{lem:exp}, the exponential function $A_{i,j} = \exp(s_{i,j})$ can be computed by a uniform threshold circuit of size $\mathrm{poly}(n)$ and depth $d_{\exp}$.

    Combining the depths from each step, the total depth required to compute $A_{i,j}$ is
    $$
    d_{\text{total}} = 4(d_{\mathrm{std}} + d_\oplus) +d_\triangle+ d_{\exp}.
    $$
    Since all entries $A_{i,j}$ for $i, j \in [n]$ can be computed in parallel, the overall circuit has size $\mathrm{poly}(n)$ and depth $4(d_{\mathrm{std}} + d_\oplus) + d_\triangle+ d_{\exp}$. Therefore, the attention matrix $A$ can be computed by a uniform threshold circuit with size $\mathrm{poly}(n)$ and depth $4(d_{\mathrm{std}} + d_\oplus) + d_\triangle + d_{\exp}$.

    Thus we complete the proof.
\end{proof}

Here we state the proof of Lemma~\ref{lem:mlp_tc0}.
\begin{lemma}[MLP computation in $\mathsf{TC}^0$, formal version of Lemma~\ref{lem:mlp_tc0}]\label{lem:mlp_tc0:informal}
    If $p \leq \poly(n)$, then
    the MLP layer in Definition~\ref{def:mlp} can be computable by a uniform threshold circuit with size $\poly(n)$ and depth $2d_\mathrm{std} + d_{\oplus}$.
\end{lemma}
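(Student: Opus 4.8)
The plan is to unroll the definition of the MLP layer from Definition~\ref{def:mlp} and track the circuit depth contributed by each arithmetic sub-operation, exactly in the style of the proof of Lemma~\ref{lem:attn}. Recall that for each $i \in [n]$ we must compute $g^{\mathrm{MLP}}(X)_{i,*} = W X_{i,*} + b$, where $W \in \F_p^{d \times d}$, $X_{i,*} \in \F_p^{d}$, and $b \in \F_p^{d}$. The two ingredients are a matrix-vector product followed by a vector addition, and both can be handled by the floating-point primitives already shown to be in $\TC^0$.

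First I would handle the matrix-vector product $W X_{i,*}$. Since $W$ is $d \times d$ with $d \leq n \leq \poly(n)$ and $X_{i,*}$ is $d \times 1$, this is a special case of Lemma~\ref{lem:matrix_multi} (matrix multiplication in $\TC^0$), giving a uniform threshold circuit of size $\poly(n)$ and depth $d_\mathrm{std} + d_\oplus$: the $d$ pairwise products $W_{j,k} (X_{i,*})_k$ are computed in parallel in depth $d_\mathrm{std}$ by Part~1 of Lemma~\ref{lem:float_operations_TC}, and then each coordinate is obtained by an iterated addition of $d$ floating-point numbers in depth $d_\oplus$ by Part~3. Next I would add the bias vector $b$ coordinate-wise: this is $d$ independent floating-point additions, each computable in depth $d_\mathrm{std}$ by Part~1 of Lemma~\ref{lem:float_operations_TC}, performed in parallel. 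Composing the matrix-vector product with the bias addition gives total depth $(d_\mathrm{std} + d_\oplus) + d_\mathrm{std} = 2 d_\mathrm{std} + d_\oplus$. Finally, since all rows $i \in [n]$ can be processed in parallel and $n \leq \poly(n)$, the overall circuit still has size $\poly(n)$ and depth $2 d_\mathrm{std} + d_\oplus$, which is the claimed bound.

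There is really no hard part here: the lemma is a routine composition of the already-established primitives, and the only thing to be careful about is the bookkeeping of depths and confirming that the parallelism over the $n$ rows and over the $d$ coordinates does not blow up the size beyond $\poly(n)$, which follows because each scalar operation has $\poly(n)$-size circuits and there are only $\poly(n)$ of them. If anything, the mild subtlety is making sure the matrix-vector multiplication genuinely falls under the hypotheses of Lemma~\ref{lem:matrix_multi} — i.e., viewing $X_{i,*}$ as a $d \times 1$ matrix so that $n_1 = d \leq n$ and $n_2 = 1$, both $\poly(n)$ — but this is immediate.
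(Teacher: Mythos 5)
Your proposal is correct and follows essentially the same route as the paper's own proof: apply Lemma~\ref{lem:matrix_multi} to get $W X_{i,*}$ in depth $d_\mathrm{std} + d_\oplus$, add the bias via Part~1 of Lemma~\ref{lem:float_operations_TC} in depth $d_\mathrm{std}$, and parallelize over the $n$ rows to keep size $\poly(n)$ and total depth $2d_\mathrm{std} + d_\oplus$. The extra care you take in checking that the matrix-vector product fits the hypotheses of Lemma~\ref{lem:matrix_multi} is a harmless refinement of the same argument.
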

\begin{proof}
    For each $i \in [m]$,
    by Lemma~\ref{lem:matrix_multi}, we need a circuit with depth $d_\mathrm{std} + d_{\oplus}$ and size $\poly(n)$ to compute $WX_{i, *}$, and by Part 1 of Lemma~\ref{lem:float_operations_TC}, w need a circuit with depth $d_\mathrm{std}$ and size $\poly(n)$ to compute $WX_{i, *} + b$. Hence the total depth need is $2d_\mathrm{std} + d_{\oplus}$ and total size is still $\poly(n)$. Since this procedure can be done in parallel for all $i \in [n]$, the proof is complete.
\end{proof}

Here we state the proof of Lemma~\ref{lem:layer_tc0}.
\begin{lemma}[Layer-norm computation in $\mathsf{TC}^0$, formal version of Lemma~\ref{lem:layer_tc0}]\label{lem:layer_tc0:informal}
    If $p \leq \poly(n)$, then
    the Layer-wise Normalization layer in Definition~\ref{def:layer_norm} can be computable by a uniform threshold circuit with size $\poly(n)$ and depth $5d_\mathrm{std} + 2d_{\oplus} + d_\mathrm{sqrt}$.
\end{lemma}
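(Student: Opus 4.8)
The plan is to compute the circuit depth needed for the Layer-wise Normalization layer in Definition~\ref{def:layer_norm} by decomposing the formula $g^{\mathrm{LN}}(X)_{i,*} = (X_{i,*} - \mu_i)/\sqrt{\sigma_i^2}$ into its constituent arithmetic operations and tracking the depth contributed by each, invoking Lemma~\ref{lem:float_operations_TC}, Lemma~\ref{lem:sqrt}, and Corollary~\ref{cor:floor_float_TC} as needed. As with the proofs of Lemma~\ref{lem:mlp_tc0:informal} and Lemma~\ref{lem:matrix_a:informal}, the key observation is that the row index $i \in [n]$ ranges over only $\poly(n)$ values, so all rows can be handled in parallel without increasing depth, and likewise the $d \leq n$ coordinates within a row.

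First I would compute $\mu_i = \sum_{j=1}^d X_{i,j}/d$: the iterated sum of $d$ floating-point numbers costs depth $d_\oplus$ by Part 3 of Lemma~\ref{lem:float_operations_TC}, and the division by $d$ costs an additional $d_\mathrm{std}$ by Part 1, for a running total of $d_\oplus + d_\mathrm{std}$. Next, for each $j$, I would compute $X_{i,j} - \mu_i$ in parallel (depth $d_\mathrm{std}$), then square each of these differences (another $d_\mathrm{std}$), then form the iterated sum $\sum_{j=1}^d (X_{i,j}-\mu_i)^2$ (depth $d_\oplus$) and divide by $d$ (depth $d_\mathrm{std}$) to obtain $\sigma_i^2$; this block adds $3d_\mathrm{std} + d_\oplus$. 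Then I would apply Lemma~\ref{lem:sqrt} to compute $\sqrt{\sigma_i^2}$ at depth $d_\mathrm{sqrt}$, and finally divide each precomputed difference $X_{i,j}-\mu_i$ by $\sqrt{\sigma_i^2}$ in parallel over $j$, costing one more $d_\mathrm{std}$. Summing the contributions gives $(d_\mathrm{std} + d_\oplus) + (3d_\mathrm{std} + d_\oplus) + d_\mathrm{sqrt} + d_\mathrm{std} = 5d_\mathrm{std} + 2d_\oplus + d_\mathrm{sqrt}$, matching the claimed bound; the size stays $\poly(n)$ since each primitive operation has $\poly(n)$-size circuits and there are only $\poly(n)$ of them.

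This proof is essentially bookkeeping, so there is no serious obstacle, but the one point requiring slight care is reusing the intermediate quantities $X_{i,j}-\mu_i$: these are computed once (as part of forming $\sigma_i^2$) and then fed into the final division, so the subtraction is not counted twice. One should also note that all $d$ coordinates of the output row, and all $n$ rows, are produced by identical parallel sub-circuits, so the depth is determined by a single coordinate's computation while the size multiplies by at most $nd \leq \poly(n)$. With these remarks the depth accounting above is complete and the lemma follows.

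\begin{proof}
    We follow the decomposition of $g^{\mathrm{LN}}(X)_{i,*} = (X_{i,*} - \mu_i)/\sqrt{\sigma_i^2}$ from Definition~\ref{def:layer_norm}, tracking circuit depth at each stage. Fix a row index $i \in [n]$; since all $i \in [n]$ can be handled in parallel, the overall depth equals that of a single row.

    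First, $\mu_i = \sum_{j=1}^d X_{i,j}/d$ is computed by an iterated addition of $d$ floating-point numbers followed by one division. By Part 3 of Lemma~\ref{lem:float_operations_TC}, the iterated addition uses depth $d_\oplus$; by Part 1 of Lemma~\ref{lem:float_operations_TC}, the division uses depth $d_\mathrm{std}$. Hence $\mu_i$ is computed at depth $d_\mathrm{std} + d_\oplus$ and size $\poly(n)$.

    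Next, we compute, in parallel over $j \in [d]$, the differences $X_{i,j} - \mu_i$ (depth $d_\mathrm{std}$), then their squares $(X_{i,j} - \mu_i)^2$ (depth $d_\mathrm{std}$), then the iterated sum $\sum_{j=1}^d (X_{i,j}-\mu_i)^2$ (depth $d_\oplus$ by Part 3 of Lemma~\ref{lem:float_operations_TC}), and finally the division by $d$ to obtain $\sigma_i^2$ (depth $d_\mathrm{std}$). This stage adds depth $3d_\mathrm{std} + d_\oplus$ and has size $\poly(n)$.

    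By Lemma~\ref{lem:sqrt}, $\sqrt{\sigma_i^2}$ is computed by a uniform threshold circuit of size $\poly(n)$ and depth $d_\mathrm{sqrt}$. Finally, reusing the already-computed differences $X_{i,j} - \mu_i$, we divide each by $\sqrt{\sigma_i^2}$ in parallel over $j \in [d]$, which by Part 1 of Lemma~\ref{lem:float_operations_TC} costs depth $d_\mathrm{std}$ and size $\poly(n)$.

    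Summing the depths across all stages, the total depth is
    \begin{align*}
        (d_\mathrm{std} + d_\oplus) + (3d_\mathrm{std} + d_\oplus) + d_\mathrm{sqrt} + d_\mathrm{std} = 5 d_\mathrm{std} + 2 d_\oplus + d_\mathrm{sqrt}.
    \end{align*}
    Since each primitive operation is computed by a $\poly(n)$-size circuit, there are $O(d) \leq \poly(n)$ such operations per row, and $n \leq \poly(n)$ rows are processed in parallel, the overall circuit has size $\poly(n)$. Therefore the Layer-wise Normalization layer in Definition~\ref{def:layer_norm} can be computed by a uniform threshold circuit with size $\poly(n)$ and depth $5 d_\mathrm{std} + 2 d_\oplus + d_\mathrm{sqrt}$.

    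Thus we complete the proof.
\end{proof}
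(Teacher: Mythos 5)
Your proposal is correct and follows essentially the same approach as the paper: decompose the layer-norm formula into $\mu_i$, $\sigma_i^2$, the square root, and the final division, charge each stage via Lemma~\ref{lem:float_operations_TC} and Lemma~\ref{lem:sqrt}, and exploit parallelism over $i \in [n]$ and $j \in [d]$. Your bookkeeping distributes the $5d_\mathrm{std}$ slightly differently (you charge the division by $d$ in $\sigma_i^2$ and reuse the differences, while the paper recomputes the subtraction in the last stage), but the argument and the final depth are the same.
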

\begin{proof}
   For each $i \in [n]$, by Lemma~\ref{lem:float_operations_TC}, we can compute $\mu_i$ using a circuit with depth $d_\mathrm{std} + d_{\oplus}$ and size $\poly(n)$ and then compute $\sigma_i^2$ with depth $2d_\mathrm{std} + d_{\oplus}$ and size $\poly(n)$. By Lemma~\ref{lem:float_operations_TC} and Lemma~\ref{lem:sqrt}, we can compute $g^\mathrm{LN}(x)_{i,*}$ using a circuit with depth $2d_\mathrm{std} + d_\mathrm{sqrt}$ and size $\poly(n)$. Hence the total needed depth is $5d_\mathrm{std} + 2d_{\oplus} + d_\mathrm{sqrt}$ and size is $\poly(n)$. Since this procedure can be done in parallel for all $i \in [n]$, the proof is complete.
\end{proof}

Here we state the proof of Lemma~\ref{lem:tf}.
\begin{lemma}[Multi-layer $\rope$-based Transformer computation in $\TC^0$, formal version of Lemma~\ref{lem:tf}]\label{lem:tf:informal}
    Suppose that for each $i \in [m]$, $g_i$ in $\mathsf{TF}$ is computable by a constant depth $d_g$ uniform threshold circuit with size $\poly(n)$.
    If $p \leq \poly(n)$, then
    the $\rope$-based Transformer $\mathsf{TF}$ in Definition~\ref{def:multi_layer_self_attn} can be computable by a uniform threshold circuit with size $\poly(n)$ and depth $(m+1)d_g + 7m(d_\mathrm{std} + d_\oplus) + m(d_\triangle+ d_\mathrm{exp})$.
\end{lemma}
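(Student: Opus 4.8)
The plan is to prove Lemma~\ref{lem:tf:informal} by directly composing the circuit-depth bounds already established for the individual building blocks, exploiting the fact that function composition of circuits simply adds their depths while keeping the size polynomial (since there are only finitely many compositions and each intermediate matrix has $O(\poly(n))$ entries). Recall from Definition~\ref{def:multi_layer_self_attn} that $\mathsf{TF}(X) = g_m \circ \mathsf{Attn}_m \circ \dots \circ g_1 \circ \mathsf{Attn}_1 \circ g_0(X)$, so the computation is a chain of $2m+1$ stages: one application of $g_0$, then $m$ alternating pairs of $\mathsf{Attn}_i$ followed by $g_i$.

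First I would fix the per-stage costs. By hypothesis, each $g_i$ (for $i \in \{0,1,\dots,m\}$) is computable by a uniform threshold circuit of depth $d_g$ and size $\poly(n)$; there are $m+1$ such layers, contributing total depth $(m+1)d_g$. By Lemma~\ref{lem:attn} (single $\rope$-based attention layer), each $\mathsf{Attn}_i$ is computable by a uniform threshold circuit of size $\poly(n)$ and depth $7(d_\mathrm{std}+d_\oplus) + d_\triangle + d_\mathrm{exp}$; there are $m$ such layers, contributing total depth $m(7(d_\mathrm{std}+d_\oplus) + d_\triangle + d_\mathrm{exp})$. Adding these gives exactly $(m+1)d_g + 7m(d_\mathrm{std}+d_\oplus) + m(d_\triangle + d_\mathrm{exp})$, matching the claimed bound.

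Next I would argue that the composition is legitimate as a single uniform threshold circuit. Each stage maps $\F_p^{n\times d} \to \F_p^{n\times d}$, and since $d \le n$ and $p \le \poly(n)$, every intermediate representation has $nd \le \poly(n)$ floating-point entries, each of $O(\poly(n))$ bit-length; hence wiring the output of one stage's subcircuit into the input of the next's preserves polynomial size, and the depths add because a directed path through the full circuit passes through the stages in order. Uniformity is preserved because the composition of a constant (namely $2m+1 = O(1)$) number of $\dlogtime$-uniform circuits is again $\dlogtime$-uniform: a log-time random-access machine can determine which stage a given gate belongs to and defer to that stage's uniformity machine with an appropriate index offset.

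There is essentially no hard obstacle here — the lemma is a bookkeeping consolidation of the component lemmas proved earlier. The only point requiring a modicum of care is making sure the size stays polynomial through all $2m+1 = O(1)$ compositions (immediate, since a constant number of $\poly(n)$-size circuits concatenate to a $\poly(n)$-size circuit) and that the depth accounting matches the stated formula exactly; I would simply sum the contributions $(m+1)d_g$ from the $g_i$'s and $m\cdot\big(7(d_\mathrm{std}+d_\oplus)+d_\triangle+d_\mathrm{exp}\big)$ from the attention layers and conclude. Thus the proof is complete.
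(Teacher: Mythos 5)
Your proposal is correct and follows essentially the same route as the paper's proof: invoke the hypothesis on each $g_i$ and Lemma~\ref{lem:attn} for each $\mathsf{Attn}_i$, then compose the $2m+1$ stages, adding depths to obtain $(m+1)d_g + 7m(d_\mathrm{std}+d_\oplus) + m(d_\triangle+d_\mathrm{exp})$ while the size stays $\poly(n)$. Your added remarks on uniformity of the composition and intermediate bit-lengths are extra bookkeeping the paper leaves implicit (note only that the assumption $m=O(1)$ you invoke is not needed for the lemma itself, only for Theorem~\ref{thm:main_result_tc0}).
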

\begin{proof}
    For each $i \in [m]$, by condition, $g_i$ is computable by a constant depth $d_g$ uniform threshold circuit with size $\poly(n)$.
    
    For each $i \in [m]$, by Lemma~\ref{lem:attn}, $\mathsf{Attn}_i$ is computable by a uniform threshold circuit with depth $7(d_\mathrm{std} + d_\oplus)+d_{\triangle}+d_{\exp}$ and size $\poly(n)$. 

    Hence, to compute $\mathsf{TF}(X)$, we need to compute $g_0, g_1, \ldots, g_m$ and $\mathsf{Attn}_1, \ldots, \mathsf{Attn}_m$, thus the total depth of the circuit is $(m+1)d_g + 7m(d_\mathrm{std} + d_\oplus) + m(d_\triangle+ d_\mathrm{exp})$ and the size of circuit is $\poly(n)$.

    Thus we complete the proof.
\end{proof}




\end{document}